\newtheorem{rem}{Remark}[section]
\newtheorem{lem}{Lemma}[section]
\newtheorem{theo}{Theorem}[section]
\DeclarePairedDelimiterX{\infdivx}[2]{(}{)}{%
  #1\;\delimsize\|\;#2%
}
\DeclarePairedDelimiterX{\infdivy}[2]{(}{)}{%
  #1,#2%
}
\newcommand{\KL}{D_{{\rm \scriptsize{KL}}}\infdivx}
\newcommand{\JS}{D_{{\rm \scriptsize{JS}}}\infdivy}
\title{Some Theoretical Properties of GANs}
\author{
  G.~Biau\\
  Sorbonne Universit\'e, CNRS, LPSM\\
  Paris, France \\
  \texttt{gerard.biau@upmc.fr} \\
 \And
  B.~Cadre\\
  Univ Rennes, CNRS, IRMAR\\
  Rennes, France\\
   \texttt{benoit.cadre@ens-rennes.fr} \\
\And
  M.~Sangnier\\
  Sorbonne Universit\'e, CNRS, LPSM\\
  Paris, France \\
  \texttt{maxime.sangnier@upmc.fr} \\
  \And
  U.~Tanielian\\
  Sorbonne Universit\'e, CNRS, LPSM, Criteo\\
  Paris, France \\
  \texttt{u.tanielian@criteo.com}\\
}
\begin{document}

\maketitle

\begin{abstract}
Generative Adversarial Networks (GANs) are a class of generative algorithms that have been shown to produce state-of-the art samples, especially in the domain of image creation. The fundamental principle of GANs is to approximate the unknown distribution of a given data set by optimizing an objective function through an adversarial game between a family of generators and a family of discriminators. In this paper, we offer a better theoretical understanding of GANs by analyzing some of their mathematical and statistical properties. We study the deep connection between the adversarial principle underlying GANs and the Jensen-Shannon divergence, together with some optimality characteristics of the problem. An analysis of the role of the discriminator family via approximation arguments is also provided. In addition, taking a statistical point of view, we study the large sample properties of the estimated  distribution and prove in particular a central limit theorem. Some of our results are illustrated with simulated examples.
\end{abstract}

\section{Introduction}
The fields of machine learning and artificial intelligence have seen spectacular advances in recent years, one of the most promising being perhaps the success of Generative Adversarial Networks (GANs), introduced by \citet{GoPoMiXuWaOzCoBe14}. GANs are a class of generative algorithms implemented by a system of two neural networks contesting with each other in a zero-sum game framework. This technique is now recognized as being capable of generating photographs that look authentic to human observers \citep[e.g.,][]{SaGoZaChRaCg16}, and its spectrum of applications is growing at a fast pace, with impressive results in the domains of inpainting, speech, and 3D modeling, to name but a few. A survey of the most recent advances is given by \cite{Go16}.

The objective of GANs is to generate fake observations of a target distribution $p^{\star}$ from which only a true sample (e.g., real-life images represented using raw pixels) is available. It should be pointed out at the outset that the data involved in the domain are usually so complex that no exhaustive description of $p^{\star}$ by a classical parametric model is appropriate, nor its estimation by a traditional maximum likelihood approach. Similarly, the dimension of the samples is often very large, and this effectively excludes a strategy based on nonparametric density estimation techniques such as kernel or nearest neighbor smoothing, for example. In order to generate according to $p^{\star}$, GANs proceed by an adversarial scheme involving two components: a family of generators and a family of discriminators, which are both implemented by neural networks. The generators admit low-dimensional random observations with a known distribution (typically Gaussian or uniform) as input, and attempt to transform them into fake data that can match the distribution $p^{\star}$; on the other hand, the discriminators aim to accurately discriminate between the true observations from $p^{\star}$ and those produced by the generators. The generators and the discriminators are calibrated by optimizing an objective function in such a way that the distribution of the generated sample is as indistinguishable as possible from that of the original data. In pictorial terms, this process is often compared to a game of cops and robbers, in which a team of counterfeiters illegally produces banknotes and tries to make them undetectable in the eyes of a team of police officers, whose objective is of course the opposite. The competition pushes both teams to improve their methods until counterfeit money becomes indistinguishable (or not) from genuine currency.

From a mathematical point of view, here is how the generative process of GANs can be represented. All the densities that we consider in the article are supposed to be dominated by a fixed, known, measure $\mu$ on $E$, where $E$ is a Borel subset of $\mathds R^d$. This dominating measure is typically the Lebesgue or the counting measure, but, depending on the practical context, it can be a more complex measure. We assume to have at hand an i.i.d.~sample $X_1, \hdots, X_n$, drawn according to some unknown density $p^{\star}$ on $E$. These random variables model the available data, such as images or video sequences; they typically take their values in a high-dimensional space, so that the ambient dimension $d$ must be thought of as large. The generators as a whole have the form of a parametric family of functions from $\mathds R^{d'}$ to $E$ ($d' \ll d$), say $\mathscr G=\{G_{\theta}\}_{\theta \in \Theta}$, $\Theta \subset \mathds R^p$. Each function $G_{\theta}$ is intended to be applied to a $d'$-dimensional random variable $Z$ (sometimes called the noise---in most cases Gaussian or uniform), so that there is a natural family of densities associated with the generators, say $\mathscr P=\{p_{\theta}\}_{\theta \in \Theta}$, where, by definition, $G_{\theta}(Z)\stackrel{\mathscr L}{=}p_{\theta}{\rm d}\mu$. In this model, each density $p_{\theta}$ is a potential candidate to represent $p^{\star}$. On the other hand, the discriminators are described by a family of Borel functions from $E$ to $[0,1]$, say $\mathscr D$, where each $D\in \mathscr D$ must be thought of as the probability that an observation comes from $p^{\star}$ (the higher $D(x)$, the higher the probability that $x$ is drawn from $p^{\star}$). At some point, but not always, we will assume that $\mathscr D$ is in fact a parametric class, of the form $\{D_{\alpha}\}_{\alpha \in \Lambda}$, $\Lambda \subset \mathds R^q$, as is certainly always the case in practice. In GANs algorithms, both parametric models $\{G_{\theta}\}_{\theta \in \Theta}$ and $\{D_{\alpha}\}_{\alpha \in \Lambda}$ take the form of neural networks, but this does not play a fundamental role in this paper. We  will simply remember that the dimensions $p$ and $q$ are potentially very large, which takes us away from a classical parametric setting. We also insist on the fact that it is \underline{not} assumed that $p^{\star}$ belongs to $\mathscr P$.

Let $Z_1, \hdots, Z_n$ be an i.i.d.~sample of random variables, all distributed as the noise $Z$. The objective is to solve in $\theta$ the problem
$$\inf_{\theta \in \Theta}\sup_{D \in \mathscr D} \Big[\prod_{i=1}^n D(X_i)\times \prod_{i=1}^n (1-D\circ G_{\theta}(Z_i))\Big],$$
or, equivalently, to find $\hat \theta\in \Theta$ such that
\begin{equation}
\sup_{D \in \mathscr D}\hat L(\hat \theta, D) \leq \sup_{D \in \mathscr D}\hat L(\theta,D), \quad \forall \theta \in \Theta,
\label{GAN-likelihood}
\end{equation}
where
$$\hat L(\theta,D)\stackrel{\mbox{\tiny{def}}}{=}\sum_{i=1}^n \ln D(X_i)+ \sum_{i=1}^n\ln(1-D\circ G_{\theta}(Z_i))$$
($\ln$ is the natural logarithm). In this problem, $D(x)$ represents the probability that an observation $x$ comes from $p^{\star}$ rather than from $p_{\theta}$. Therefore, for each $\theta$, the discriminators (the police team) try to distinguish the original sample $X_1, \hdots, X_n$ from the fake one $G_{\theta}(Z_1), \hdots, G_{\theta}(Z_n)$ produced by the generators (the counterfeiters' team), by maximizing $D$ on the $X_i$ and minimizing it on the $G_{\theta}(Z_i)$. Of course, the generators have an exact opposite objective, and adapt the fake data in such a way as to mislead the discriminators' likelihood. All in all, we see that the criterion seeks to find the right balance between the conflicting interests of the generators and the discriminators. The hope is that the $\hat \theta$ achieving equilibrium will make it possible to generate observations $G_{\hat \theta}(Z_1), \hdots, G_{\hat \theta}(Z_n)$ indistinguishable from reality, i.e., observations with a law close to the unknown $p^{\star}$.
%
%

The criterion $\hat L(\theta,D)$ involved in (\ref{GAN-likelihood}) is the criterion originally proposed in the adversarial framework of \citet{GoPoMiXuWaOzCoBe14}. Since then, the success of GANs in applications has led to a large volume of literature on variants, which all have many desirable properties but are based on  different optimization criteria---examples are MMD-GANs \citep{DzRoGh15}, f-GANs \citep{NoCsTo16}, Wasserstein-GANs \citep{ArChBo17}, and an approach based on scattering transforms \citep{AnMa18}. All these variations and their innumerable algorithmic versions constitute the galaxy of GANs. That being said, despite increasingly spectacular applications, little is known about the mathematical and statistical forces behind these algorithms \citep[e.g.,][]{ArBo17,LiBoCh07,ZhLiZhXuHe18}, and, in fact, nearly nothing about the primary adversarial problem (\ref{GAN-likelihood}). As acknowledged by \citet{LiBoCh07}, basic questions on how well GANs can approximate the target distribution $p^{\star}$ remain largely unanswered. In particular, the role and impact of the discriminators on the quality of the approximation are still a mystery, and simple but fundamental questions regarding statistical consistency and rates of convergence remain open.

In the present article, we propose to take a small step towards a better theoretical understanding of GANs by analyzing some of the mathematical and statistical properties of the original adversarial problem (\ref{GAN-likelihood}). In Section \ref{OP}, we study the deep connection between the population version of (\ref{GAN-likelihood}) and the Jensen-Shannon divergence, together with some optimality characteristics of the problem, often referred to in the literature but in fact poorly understood. Section \ref{AP} is devoted to a better comprehension of the role of the discriminator family via approximation arguments. Finally, taking a statistical point of view, we study in Section \ref{SA} the large sample properties of the distribution $p_{\hat \theta}$ and $\hat \theta$, and prove in particular a central limit theorem for this parameter. Some of our results are illustrated with simulated examples. For clarity, most technical proofs are gathered in Section \ref{STL}.
\section{Optimality properties}
\label{OP}
We start by studying some important properties of the adversarial principle, emphasizing the role played by the Jensen-Shannon divergence. We recall that if $P$ and $Q$ are probability measures on $E$, and $P$ is absolutely continuous with respect to $Q$, then the Kullback-Leibler divergence from $Q$ to $P$ is defined as
$$\KL{P}{Q}=\int \ln \frac{{\rm d}P}{{\rm d}Q}{\rm d}P,$$
where $\frac{{\rm d}P}{{\rm d}Q}$ is the Radon-Nikodym derivative of $P$ with respect to $Q$.  The Kullback-Leibler divergence is always nonnegative, with $\KL{P}{Q}$ zero if and only if $P=Q$. If $p=\frac{{\rm d}P}{{\rm d}\mu}$ and $q=\frac{{\rm d}Q}{{\rm d}\mu}$ exist (meaning that $P$ and $Q$ are absolutely continuous with respect to $\mu$, with densities $p$ and $q$), then the Kullback-Leibler divergence is given as
$$\KL{P}{Q}=\int p\ln \frac{p}{q}{\rm d}\mu,$$
and alternatively denoted by $\KL{p}{q}$. We also recall that the Jensen-Shannon divergence is a symmetrized version of the Kullback-Leibler divergence. It is defined for any probability measures $P$ and $Q$ on $E$ by
$$\JS{P}{Q}=\frac{1}{2}\KL[\Big]{P}{\frac{P+Q}{2}}+\frac{1}{2}\KL[\Big]{Q}{\frac{P+Q}{2}},$$
and satisfies $0\leq \JS{P}{Q} \leq \ln 2$. The square root of the Jensen-Shannon divergence is a metric often referred to as Jensen-Shannon distance \citep{EnSc03}. When $P$ and $Q$ have densities $p$ and $q$ with respect to $\mu$, we use the notation $\JS{p}{q}$ in place of $\JS{P}{Q}$.

For a generator $G_{\theta}$ and an arbitrary discriminator $D\in \mathscr D$, the criterion $\hat L(\theta,D)$ to be optimized in (\ref{GAN-likelihood}) is but the empirical version of the probabilistic criterion
$$
L(\theta,D)\stackrel{\mbox{\tiny{def}}}{=}\int \ln(D) p^{\star}{\rm d}\mu+\int \ln (1-D) p_{\theta}{\rm d}\mu.
$$
We assume for the moment that the discriminator class $\mathscr D$ is not restricted and equals $\mathscr D_{\infty}$, the set of all Borel functions from $E$ to $[0,1]$. We note however that, for all $\theta \in \Theta$,
$$0\geq \sup_{D \in \mathscr D_{\infty}}L(\theta,D) \geq -\ln 2 \Big(\int p^{\star}{\rm d}\mu+\int  p_{\theta}{\rm d}\mu\Big)=-\ln 4,$$
so that $\inf_{\theta \in \Theta}\sup_{D \in \mathscr D_{\infty}} L(\theta,D) \in [-\ln 4,0]$. Thus,
$$\inf_{\theta \in \Theta}\sup_{D \in \mathscr D_{\infty}}L(\theta,D)=\inf_{\theta \in \Theta}\sup_{D \in \mathscr D_{\infty}: L(\theta,D)> -\infty}L(\theta,D).$$
This identity points out the importance of discriminators such that $L(\theta,D)> -\infty$, which we call $\theta$-admissible. In the sequel, in order to avoid unnecessary problems of integrability, we only consider such discriminators, keeping in mind that the others have no interest.

Of course, working with $\mathscr D_{\infty}$ is somehow an idealized vision, since in practice the discriminators are always parameterized by some parameter $\alpha \in \Lambda$, $\Lambda \subset \mathds R^q$. Nevertheless, this point of view is informative and, in fact, is at the core of the connection between our generative problem and the Jensen-Shannon divergence. Indeed, taking the supremum of $L(\theta,D)$ over $\mathscr D_{\infty}$, we have
\begin{align*}
\sup_{D \in \mathscr D_{\infty}}L(\theta,D)&=\sup_{D\in \mathscr D_{\infty}}\int \big[ \ln(D) p^{\star}+ \ln (1-D) p_{\theta}\big]{\rm d}\mu\\
& \leq \int \sup_{D\in \mathscr D_{\infty}}\big[ \ln(D) p^{\star}+ \ln (1-D) p_{\theta}\big]{\rm d}\mu\\
&=L(\theta,D_{\theta}^{\star}),
\end{align*}
where
\begin{align}
\label{optimD}
D_{\theta}^{\star}\stackrel{\mbox{\tiny{def}}}{=}\frac{p^{\star}}{p^{\star}+p_{\theta}}.
\end{align}
By observing that $L(\theta,D_{\theta}^{\star})=2\JS{p^{\star}}{p_{\theta}}-\ln 4$, we conclude that, for all $\theta \in \Theta$,
$$\sup_{D \in \mathscr D_{\infty}}L(\theta,D)=L(\theta,D_{\theta}^{\star})=2\JS{p^{\star}}{p_{\theta}}-\ln 4.$$
In particular, $D_{\theta}^{\star}$ is $\theta$-admissible. The fact that $D_{\theta}^{\star}$ realizes the supremum of $L(\theta,D)$ over $\mathscr D_{\infty}$ and that this supremum is connected to the Jensen-Shannon divergence between $p^{\star}$ and $p_{\theta}$ appears in the original article by \citet{GoPoMiXuWaOzCoBe14}. This remark has given rise to many developments that interpret the adversarial problem (\ref{GAN-likelihood}) as the empirical version of the minimization problem $\inf_{\theta}\JS{p^{\star}}{p_{\theta}}$ over $\Theta$. Accordingly, many GANs algorithms try to learn the optimal function $D_{\theta}^{\star}$, using for example stochastic gradient descent techniques and mini-batch approaches. However, it has not been known until now whether $D_{\theta}^{\star}$ is unique as a maximizer of $L(\theta,D)$ over all $D$. Our first result shows that this is indeed the case.
\begin{theo}
Let $\theta \in \Theta$ be such that $p_\theta >0$ $\mu$-almost everywhere. Then the function $D_{\theta}^{\star}$ is the unique discriminator that achieves the supremum of the functional $D\mapsto L(\theta,D)$ over $\mathscr D_{\infty}$, i.e.,
$$\{D_{\theta}^{\star}\}=\stackrel[D\in \mathscr D_{\infty}]{}{\arg\max} L(\theta,D).$$
\end{theo}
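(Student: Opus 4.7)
The plan is to reduce the uniqueness assertion to a pointwise strict-concavity argument applied to the integrand of $L(\theta, D)$, then integrate. Concretely, I would define, for each $x \in E$, the real function
\[
\phi_x(u) = p^{\star}(x)\ln u + p_{\theta}(x)\ln(1-u), \qquad u \in [0,1],
\]
with the usual conventions $\ln 0 = -\infty$ and $0 \cdot \ln 0 = 0$. Then $L(\theta, D) = \int \phi_x(D(x))\,\mathrm{d}\mu(x)$ for every $\theta$-admissible discriminator $D$.

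Next, I would establish the pointwise uniqueness claim: for $\mu$-almost every $x$, the function $\phi_x$ attains its maximum on $[0,1]$ at the single point $u = D_\theta^{\star}(x) = p^{\star}(x)/(p^{\star}(x) + p_\theta(x))$. Using the assumption that $p_\theta(x) > 0$ for $\mu$-a.e.\ $x$, one splits into two cases. If $p^{\star}(x) > 0$ and $p_\theta(x) > 0$, then $\phi_x$ is smooth and strictly concave on $(0,1)$ with $\phi_x(0) = \phi_x(1) = -\infty$, so its unique maximizer is the interior critical point $p^{\star}(x)/(p^{\star}(x)+p_\theta(x))$. If $p^{\star}(x) = 0$ (with $p_\theta(x) > 0$), then $\phi_x(u) = p_\theta(x)\ln(1-u)$ is strictly decreasing on $[0,1)$ with value $-\infty$ at $1$, so the unique maximizer is $u = 0 = D_\theta^{\star}(x)$.

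Putting these together, for $\mu$-a.e.\ $x$ and every $u \in [0,1]$,
\[
\phi_x(u) \leq \phi_x\bigl(D_\theta^{\star}(x)\bigr),
\]
with equality if and only if $u = D_\theta^{\star}(x)$. Integrating against $\mu$ yields $L(\theta, D) \leq L(\theta, D_\theta^{\star})$ for every $\theta$-admissible $D$, which re-proves the bound already noted in the text; but, more importantly, if $D \in \mathscr D_{\infty}$ is $\theta$-admissible and also achieves the supremum, then the nonnegative integrand $\phi_x(D_\theta^{\star}(x)) - \phi_x(D(x))$ has zero integral, hence vanishes $\mu$-a.e., which by the pointwise uniqueness forces $D = D_\theta^{\star}$ $\mu$-almost everywhere.

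The main (minor) obstacle is the careful bookkeeping around the set $\{p^{\star} = 0\}$ and around the boundary values $u \in \{0,1\}$: one has to check that the pointwise maximizer of $\phi_x$ is still unique there and equals $D_\theta^{\star}(x)$, which is exactly where the hypothesis $p_\theta > 0$ $\mu$-a.e.\ is used (without it, on $\{p_\theta = 0\}$ any value $u \in (D_\theta^{\star}(x), 1]$ would give $\phi_x(u) = p^{\star}(x)\ln u$, destroying uniqueness on that set). Beyond this, the argument is just the standard measure-theoretic fact that a nonnegative measurable function of integral zero is zero almost everywhere.
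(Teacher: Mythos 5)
Your proof is correct, but it follows a genuinely different route from the paper's. You argue pointwise: for $\mu$-almost every $x$ (using $p_\theta>0$ $\mu$-a.e.), the function $u\mapsto p^{\star}(x)\ln u+p_{\theta}(x)\ln(1-u)$ is strictly concave with value $-\infty$ at the endpoints (or strictly monotone when $p^{\star}(x)=0$), hence has the unique maximizer $D_{\theta}^{\star}(x)$; any admissible maximizer $D$ then makes the nonnegative integrand $\phi_x(D_{\theta}^{\star}(x))-\phi_x(D(x))$ integrate to zero, so it vanishes a.e.\ and $D=D_{\theta}^{\star}$ a.e. The paper instead works globally: it rewrites the equality $L(\theta,D)=L(\theta,D_{\theta}^{\star})$ as an identity involving the Kullback--Leibler divergences $\KL{P^{\star}}{\kappa}$ and $\KL{P_{\theta}}{\kappa'}$ to suitably renormalized measures, bounds their sum by $\ln\big[s(2-s)\big]\leq 0$ with $s=\int D(p^{\star}+p_{\theta}){\rm d}\mu\in[0,2]$, forces both divergences to vanish, and then needs a separate argument (via $D(1-D)\leq 1/4$ and $p_\theta>0$) to settle the set $\{p^{\star}=p_{\theta}\}$. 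Your localization is more elementary, avoids both the renormalization step and the case split on $\{p^{\star}=p_{\theta}\}$, and makes transparent exactly where admissibility and $p_\theta>0$ enter; the paper's argument, in exchange, stays phrased in the information-theoretic language (nonnegativity and definiteness of the KL divergence) that runs through the rest of the section. One small inaccuracy in your closing remark: on $\{p_{\theta}=0,\,p^{\star}>0\}$ the pointwise maximizer is still unique (namely $u=1=D_{\theta}^{\star}(x)$); pointwise uniqueness only breaks down on $\{p_{\theta}=0,\,p^{\star}=0\}$, where $\phi_x\equiv 0$. This does not affect your main argument, which only invokes uniqueness on the a.e.\ set $\{p_{\theta}>0\}$.
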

\begin{proof}
Let $D \in \mathscr D_{\infty}$ be a discriminator such that $L(\theta,D)=L(\theta,D_{\theta}^{\star})$. In particular, $L(\theta,D)>-\infty$ and $D$ is $\theta$-admissible. We have to show that $D=D_{\theta}^{\star}$. Notice that
\begin{equation}
\label{XXX}
\int \ln (D)p^{\star}{\rm d}\mu+\int \ln (1-D)p_{\theta}{\rm d}\mu = \int \ln (D_{\theta}^{\star})p^{\star}{\rm d}\mu+\int \ln (1-D_{\theta}^{\star})p_{\theta}{\rm d}\mu.
\end{equation}
Thus,
$$-\int \ln \Big(\frac{D_{\theta}^{\star}}{D}\Big)p^{\star}{\rm d}\mu=\int \ln \Big(\frac{1-D_{\theta}^{\star}}{1-D}\Big)p_{\theta}{\rm d}\mu,$$
i.e., by definition of $D_{\theta}^{\star}$,
\begin{equation}
\label{birs}
-\int \ln \Big(\frac{p^{\star}}{D(p^{\star}+p_{\theta})}\Big)p^{\star}{\rm d}\mu=\int \ln \Big(\frac{p_{\theta}}{(1-D)(p^{\star}+p_{\theta})}\Big)p_{\theta}{\rm d}\mu.
\end{equation}
Let ${\rm d}P^{\star}=p^{\star}{\rm d}\mu$, ${\rm d}P_{\theta}=p_{\theta}{\rm d}\mu$,
$${\rm d \kappa}=\frac{D(p^{\star}+p_{\theta})}{\int D(p^{\star}+p_{\theta}){\rm d}\mu}{\rm d}\mu,\quad \mbox{and}\quad {\rm d \kappa'}=\frac{(1-D)(p^{\star}+p_{\theta})}{\int (1-D)(p^{\star}+p_{\theta}){\rm d}\mu}{\rm d}\mu.$$
With this notation, identity (\ref{birs}) becomes
$$-\KL{P^{\star}}{\kappa}+\ln \Big[\int D(p^{\star}+p_{\theta}){\rm d}\mu \Big]=\KL{P_{\theta}}{\kappa'}-\ln \Big[\int (1-D)(p^{\star}+p_{\theta}){\rm d}\mu \Big].$$
Upon noting that
$$\int (1-D)(p^{\star}+p_{\theta}){\rm d}\mu=2-\int D(p^{\star}+p_{\theta}){\rm d}\mu,$$
we obtain
$$\KL{P^{\star}}{\kappa}+\KL{P_{\theta}}{\kappa'}=\ln\Big[\int D(p^{\star}+p_{\theta}){\rm d}\mu\big(2-\int D(p^{\star}+p_{\theta}){\rm d}\mu\big)\Big].$$
Since $\int D(p^{\star}+p_{\theta}){\rm d}\mu \in [0,2]$, we find that $\KL{P^{\star}}{\kappa}+\KL{P_{\theta}}{\kappa'}\leq 0$, which implies
$$\KL{P^{\star}}{\kappa}=0 \quad \mbox{and}\quad \KL{P_{\theta}}{\kappa'}=0.$$
Consequently,
$$p^{\star}=\frac{D(p^{\star}+p_{\theta})}{\int D(p^{\star}+p_{\theta}){\rm d}\mu}\quad \mbox{and}\quad p_{\theta}=\frac{(1-D)(p^{\star}+p_{\theta})}{2-\int D(p^{\star}+p_{\theta}){\rm d}\mu},$$
that is,
$$\int D(p^{\star}+p_{\theta}){\rm d}\mu=\frac{D(p^{\star}+p_{\theta})}{p^{\star}}\quad \mbox{and}\quad 1-D=\frac{p_{\theta}}{p^{\star}+p_{\theta}}\Big(2-\int D(p^{\star}+p_{\theta}){\rm d}\mu\Big).$$
We conclude that
$$1-D=\frac{p_{\theta}}{p^{\star}+p_{\theta}}\Big(2-\frac{D(p^{\star}+p_{\theta})}{p^{\star}}\Big),$$
i.e., $D=\frac{p^{\star}}{p^{\star}+p_{\theta}}$ whenever $p^{\star}\neq p_\theta$.

To complete the proof, it remains to show that $D=1/2$ $\mu$-almost everywhere on the set
$A\stackrel{\mbox{\tiny{def}}}{=}\{p_{\theta}=p^{\star}\}$. Using the result above together with equality (\ref{XXX}), we see that
$$\int_A \ln (D)p^{\star} {\rm d}\mu+\int_A \ln (1-D) p_\theta {\rm
d}\mu=\int_A \ln(1/2) p^{\star} {\rm d}\mu+\int_A \ln (1/2) p_\theta {\rm
d}\mu,$$
that is,
$$\int_A \big[\ln(1/4)-\ln(D(1-D))\big]p_{\theta} {\rm d}\mu=0.$$
Observing that $D(1-D)\le 1/4$ since $D$ takes values in $[0,1]$, we deduce that
$[\ln(1/4)-\ln(D(1-D))]p_{\theta} \mathbf 1_A=0$ $\mu$-almost everywhere. Therefore,
$D=1/2$ on the set $\{p_{\theta}=p^{\star}\}$, since $p_\theta >0$
$\mu$-almost everywhere by assumption.
\end{proof}
By definition of the optimal discriminator $D_{\theta}^{\star}$, we have
$$L(\theta,D_{\theta}^{\star})=\sup_{D \in \mathscr D_{\infty}}L(\theta,D)=2\JS{p^{\star}}{p_{\theta}}-\ln 4, \quad \forall \theta \in \Theta.$$
Therefore, it makes sense to let the parameter $\theta^{\star} \in \Theta$ be defined as
$$L(\theta^{\star},D^{\star}_{\theta^{\star}}) \leq L(\theta,D_{\theta}^{\star}), \quad \forall \theta \in \Theta,$$
or, equivalently,
\begin{equation}
\label{PA}
\JS{p^{\star}}{p_{\theta^{\star}}} \leq \JS{p^{\star}}{p_{\theta}}, \quad \forall \theta \in \Theta.
\end{equation}
The parameter $\theta^{\star}$ may be interpreted as the best parameter in $\Theta$ for approaching the unknown density $p^{\star}$ in terms of Jensen-Shannon divergence, in a context where all possible discriminators are available. In other words, the generator $G_{\theta^{\star}}$ is the ideal generator, and the density $p_{\theta^{\star}}$ is the one we would ideally like to use to generate fake samples. Of course, whenever $p^{\star} \in \mathscr P$ (i.e., the target density is in the model), then $p^{\star}=p_{\theta^{\star}}$, $\JS{p^{\star}}{p_{\theta^{\star}}}=0$, and $D_{\theta^{\star}}^{\star}=1/2$. This is, however, a very special case, which is of no interest, since in the applications covered by GANs, the data are usually so complex that the hypothesis $p^{\star} \in \mathscr P$ does not hold.

In the general case, our next theorem provides sufficient conditions for the existence and unicity of $\theta^{\star}$. For $P$ and $Q$ probability measures on $E$, we let $\delta(P,Q)=\sqrt{\JS{P}{Q}}$, and recall that $\delta$ is a distance on the set of probability measures on $E$ \citep{EnSc03}. We let ${\rm d}P^{\star}=p^{\star}{\rm d}\mu$ and, for all $\theta \in \Theta$, ${\rm d}P_{\theta}=p_{\theta}{\rm d}\mu$.
\begin{theo}
\label{E+U}
Assume that the model $\{P_{\theta}\}_{\theta \in \Theta}$ is identifiable, convex, and compact for the metric $\delta$. Assume, in addition, that there exist $0<m\leq M$ such that $m\leq p^{\star}\leq M$ and, for all $\theta \in \Theta$, $p_{\theta}\leq M$. Then there exists a unique $\theta^{\star} \in \Theta$ such that
$$\{\theta^{\star}\}=\stackrel[\theta \in \Theta]{}{\arg\min}L(\theta,D^{\star}_{\theta}),$$
or, equivalently,
$$\{\theta^{\star}\}=\stackrel[\theta \in \Theta]{}{\arg\min} \JS{p^{\star}}{p_{\theta}}.$$
\end{theo}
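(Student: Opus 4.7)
The plan is to handle existence by compactness and continuity, and uniqueness by convexity of the model combined with strict convexity of $q \mapsto \JS{p^{\star}}{q}$. For existence, since $\delta$ is a metric on probability measures, the map $Q \mapsto \delta(P^{\star},Q)$ is $1$-Lipschitz by the triangle inequality and hence continuous on the compact set $(\{P_{\theta}\}_{\theta \in \Theta},\delta)$. As $\JS{p^{\star}}{p_{\theta}} = \delta^2(P^{\star},P_{\theta})$, this continuous functional attains its infimum at some $P_{\theta^{\star}}$, from which identifiability extracts a unique $\theta^{\star} \in \Theta$.

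For uniqueness of the minimizing measure, I would argue by contradiction. If two distinct minimizers $P_{\theta_1} \neq P_{\theta_2}$ existed, convexity of the model would place the midpoint $\tilde P := \tfrac{1}{2}(P_{\theta_1}+P_{\theta_2})$ in the model, say $\tilde P = P_{\theta_3}$, and strict convexity of $q \mapsto \JS{p^{\star}}{q}$ would give
$$\JS{p^{\star}}{p_{\theta_3}} < \tfrac{1}{2}\JS{p^{\star}}{p_{\theta_1}} + \tfrac{1}{2}\JS{p^{\star}}{p_{\theta_2}},$$
contradicting minimality. Everything thus reduces to establishing strict convexity.

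To that end, I would split
$$\JS{p^{\star}}{q} = \tfrac{1}{2}\KL[\Big]{P^{\star}}{\tfrac{P^{\star}+Q}{2}} + \tfrac{1}{2}\KL[\Big]{Q}{\tfrac{P^{\star}+Q}{2}}.$$
The second term is convex in $Q$ by joint convexity of Kullback--Leibler divergence applied to the affine-in-$Q$ pair $(Q, \tfrac{P^{\star}+Q}{2})$. The first term rewrites as $\int p^{\star}\ln p^{\star}\,{\rm d}\mu - \int p^{\star}\ln(\tfrac{p^{\star}+q}{2})\,{\rm d}\mu$, with all integrals finite because the bounds $m \leq p^{\star}, p_\theta \leq M$ confine the mixture to $[m/2,M]$. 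Strict convexity of $-\ln$ together with $p^{\star}\geq m > 0$ $\mu$-almost everywhere then upgrades the convexity of this second integral in $q$ to strict convexity: whenever $q_1 \neq q_2$ on a set of positive $\mu$-measure, positivity of $p^{\star}$ on that set turns the pointwise strict inequality of $-\ln$ into a strict inequality at the level of the integral. I expect this transfer to be the main obstacle, since one must be careful that the merely convex second KL term does not wash out the strictness coming from the first, and the lower bound $p^{\star} \geq m > 0$ is precisely what unlocks the argument.
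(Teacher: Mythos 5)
Your proposal is correct, and it follows the paper's overall strategy (existence by compactness of the model plus continuity of $P\mapsto\JS{P^{\star}}{P}$, which is the same triangle-inequality argument as in the paper; uniqueness by strict convexity of $p\mapsto\JS{p^{\star}}{p}$ on the convex model, with identifiability transferring uniqueness of the minimizing measure to the parameter). Where you genuinely diverge is in how strict convexity is obtained. The paper works with the full integrand $F_a(x)=a\ln\frac{2a}{a+x}+x\ln\frac{2x}{a+x}$, computes $F''_a(x)=\frac{a}{x(a+x)}\geq\frac{m}{2M^2}$ for $a\in[m,M]$, $x\in[0,M]$, and integrates the resulting uniform strong-convexity inequality, which yields strict convexity \emph{with a quantitative gap} $\frac{\beta}{2}\lambda(1-\lambda)\int(p_1-p_2)^2{\rm d}\mu$, i.e., strong convexity of $p\mapsto\JS{p^{\star}}{p}$ along $L^2(\mu)$; this is slightly more than uniqueness and could be reused for stability estimates on the minimizer. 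You instead split the Jensen--Shannon divergence into its two Kullback--Leibler halves, get convexity of $Q\mapsto\KL{Q}{\frac{P^{\star}+Q}{2}}$ from joint convexity of $D_{\rm KL}$ composed with an affine map, and extract strictness from the term $q\mapsto-\int p^{\star}\ln\big(\frac{p^{\star}+q}{2}\big){\rm d}\mu$ using $p^{\star}\geq m>0$; the bounds by $M$ serve only to keep the split integrals finite. This is softer (no derivative computation, no modulus of convexity) but delivers exactly what uniqueness needs. One remark: the point you flag as the ``main obstacle'' is not one---the sum of a strictly convex function and a convex function is automatically strictly convex, so no care is required to prevent the second KL term from ``washing out'' the strictness; the only place where genuine care is needed is the step you already handle, namely that a pointwise strict inequality on a set of positive $\mu$-measure, with integrable and elsewhere-dominated integrands, integrates to a strict inequality.
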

\begin{proof}
Observe that $\mathscr P=\{p_{\theta}\}_{\theta\in \Theta} \subset L^1(\mu)\cap L^2(\mu)$ since $0\leq p_{\theta}\leq M$ and $\int p_{\theta}{\rm d}\mu=1$. Recall that $L(\theta,D_{\theta}^{\star})=\sup_{D \in \mathscr D_{\infty}}L(\theta,D)=2\JS{p^{\star}}{p_{\theta}}-\ln 4$. By identifiability of $\{P_{\theta}\}_{\theta \in \Theta}$, it is enough to prove that there exists a unique density $p_{\theta^{\star}}$ of $\mathscr P$ such that
$$\{p_{\theta^{\star}}\}=\stackrel[p \in \mathscr P]{}{\arg \min}\JS{p^{\star}}{p}.$$
{\bf Existence}. Since $\{P_{\theta}\}_{\theta \in \Theta}$ is compact for $\delta$, it is enough to show that the function
\begin{equation*}
\begin{array}{lll}
\{P_{\theta}\}_{\theta \in \Theta} &\to & \mathds R_+\\
P &\mapsto &\JS{P^{\star}}{P}
\end{array}
\end{equation*}
is continuous. But this is clear since, for all $P_1,P_2 \in \{P_{\theta}\}_{\theta \in \Theta}$, $|\delta(P^{\star},P_1)-\delta (P^{\star},P_2)| \leq \delta(P_1,P_2)$ by the triangle inequality. Therefore, ${\arg \min}_{p \in \mathscr P}\JS{p^{\star}}{p} \neq \emptyset$.

{\bf Unicity}. For $a \in [m,M]$, we consider the function $F_a$ defined by
$$F_a(x)=a\ln\Big(\frac{2a}{a+x}\Big)+x\ln\Big(\frac{2x}{a+x}\Big), \quad x\in [0,M],$$
with the convention $0\ln0=0$. Clearly, $F''_a(x)=\frac{a}{x(a+x)}\geq \frac{m}{2M^2}$, which shows that $F_a$ is $\beta$-strongly convex, with $\beta>0$ independent of $a$. Thus, for all $\lambda \in [0,1]$, all $x_1,x_2 \in [0,M]$, and $a\in [m,M]$,
$$F_a(\lambda x_1+(1-\lambda)x_2)\leq \lambda F_a(x_1)+(1-\lambda)F_a(x_2)-\frac{\beta}{2}\lambda(1-\lambda)(x_1-x_2)^2.$$
Thus, for all $p_1,p_2 \in \mathscr P$ with $p_1\neq p_2$, and for all $\lambda \in (0,1)$,
\begin{align*}
&\JS{p^{\star}}{\lambda p_1+(1-\lambda)p_2} \\
&\quad =\int F_{p^{\star}}(\lambda p_1+(1-\lambda)p_2){\rm d}\mu\\
& \quad \leq \lambda \JS{p^{\star}}{p_1}+(1-\lambda)\JS{p^{\star}}{p_2}-\frac{\beta}{2} \lambda(1-\lambda)\int(p_1-p_2)^2{\rm d}\mu\\
& \quad < \lambda \JS{p^{\star}}{p_1}+(1-\lambda)\JS{p^{\star}}{p_2}.
\end{align*}
In the last inequality, we used the fact that $\frac{\beta}{2} \lambda(1-\lambda)\int(p_1-p_2)^2{\rm d}\mu$ is positive and finite since $p_{\theta}\in L^2(\mu)$ for all $\theta$. We conclude that the function $L^1({\mu}) \supset  \mathscr P\ni p \mapsto \JS{p^{\star}}{p}$ is strictly convex. Therefore, its $\arg\min$ is either the empty set or a singleton.
\end{proof}
\begin{rem}
There are simple conditions for the model $\{P_{\theta}\}_{\theta \in \Theta}$ to be compact for the metric $\delta$. It is for example enough to suppose that $\Theta$ is compact, $\{P_{\theta}\}_{\theta \in \Theta}$ is convex, and
\begin{enumerate}[$(i)$]
\item For all $x\in E$, the function $\theta\mapsto p_{\theta}(x)$ is continuous on $\Theta$;
\item One has $\sup_{(\theta,\theta') \in \Theta^2}|p_{\theta}\ln p_{\theta'}| \in L^1(\mu)$.
\end{enumerate}
Let us quickly check that under these conditions, $\{P_{\theta}\}_{\theta \in \Theta}$ is compact for the metric $\delta$. Since $\Theta$ is compact, by the sequential characterization of compact sets, it is enough to prove that if $\Theta \supset (\theta_n)_n$ converges to $\theta \in \Theta$, then $\JS{p_{\theta}}{p_{\theta_n}}\to 0$. But,
$$\JS{p_{\theta}}{p_{\theta_n}}=\int \Big[p_{\theta}\ln \Big(\frac{2p_{\theta}}{p_{\theta}+p_{\theta_n}}\Big)+p_{\theta_n}\ln\Big(\frac{2p_{\theta_n}}{p_{\theta}+p_{\theta_n}}\Big)\Big]{\rm d}\mu.$$
By the convexity of $\{P_{\theta}\}_{\theta \in \Theta}$, using $(i)$ and $(ii)$, the Lebesgue dominated convergence theorem shows that $\JS{p_{\theta}}{p_{\theta_n}}\to 0$, whence the result.
\end{rem}
Interpreting the adversarial problem in connection with the optimization program $\inf_{\theta \in \Theta}\JS{p^{\star}}{p_{\theta}}$ is a bit misleading, because this is based on the assumption that all possible discriminators are available (and in particular the optimal discriminator $D_{\theta}^{\star}$). In the end this means assuming that we know the distribution $p^{\star}$, which is eventually not acceptable from a statistical perspective. In practice, the class of discriminators is always restricted to be a parametric family $\mathscr D=\{D_{\alpha}\}_{\alpha \in \Lambda}$, $\Lambda \subset \mathds R^q$, and it is with this class that we have to work. From our point of view, problem (\ref{GAN-likelihood}) is a likelihood-type problem involving two parametric families $\mathscr G$ and $\mathscr D$, which must be analyzed as such, just as we would do for a classical maximum likelihood approach. In fact, it takes no more than a moment's thought to realize that the key lies in the approximation capabilities of the discriminator class $\mathscr D$ with respect to the functions $D_{\theta}^{\star}$, $\theta \in \Theta$. This is the issue that we discuss in the next section.
\section{Approximation properties}
\label{AP}
In the remainder of the article, we assume that $\theta^{\star}$ exists, keeping in mind that Theorem \ref{E+U} provides us with precise conditions guaranteeing its existence and its unicity. As pointed out earlier, in practice only a parametric class $\mathscr D=\{D_{\alpha}\}_{\alpha \in \Lambda}$, $\Lambda \subset \mathds R^q$, is available, and it is therefore logical to consider the parameter $\bar \theta \in \Theta$ defined by
$$\sup_{D \in \mathscr D}L(\bar \theta,D) \leq \sup_{D \in \mathscr D}L(\theta,D), \quad \forall \theta \in \Theta.$$
(We assume for now that $\bar \theta$ exists---sufficient conditions for this existence, relating to compactness of $\Theta$ and regularity of the model $\mathscr P$, will be given in the next section.) The density $p_{\bar \theta}$ is thus the best candidate to imitate $p_{\theta^{\star}}$, given the parametric families of generators $\mathscr G$ and discriminators $\mathscr D$. The natural question is then: is it possible to quantify the proximity between $p_{\bar \theta}$ and the ideal $p_{\theta^{\star}}$ via the approximation properties of the class $\mathscr D$? In other words, if $\mathscr D$ is growing, is it true that $p_{\bar \theta}$ approaches $p_{\theta^{\star}}$, and in the affirmative, in which sense and at which speed? Theorem \ref{theorem-encadrement} below provides a first answer to this important question, in terms of the difference $\JS{p^{\star}}{p_{\bar \theta}}-\JS{p^{\star}}{p_{\theta^{\star}}}$. To state the result, we will need some assumptions.

{\bf Assumption} $(H_0)$ There exists a positive constant $\underline t \in (0,1/2]$ such that
$$\min (D_{\theta}^{\star} ,1-D_{\theta}^{\star})  \geq \underline t, \quad \forall \theta \in \Theta.$$

We note that this assumption implies that, for all $\theta \in \Theta$,
$$\frac{\underline t}{1-\underline t}p^{\star} \leq p_{\theta} \leq \frac{1-\underline t}{\underline t}p^{\star}.$$
It is a mild requirement, which implies in particular that for any $\theta$, $p_{\theta}$ and $p^{\star}$ have the same support, independent of $\theta$.

Let $\|\cdot\|_{\infty}$ be the supremum norm of functions on $E$. Our next condition guarantees that the parametric class $\mathscr D$ is rich enough to approach the discriminator $D^{\star}_{\bar \theta}$.

{\bf Assumption} $(H_{\varepsilon})$ There exists $\varepsilon \in (0,\underline t)$ and $D \in \mathscr D$, a $\bar \theta$-admissible discriminator, such that $\|D-D_{\bar \theta}^{\star}\|_{\infty}\leq \varepsilon$.

We are now equipped to state our approximation theorem. For ease of reading, its proof is postponed to Section \ref{STL}.
\begin{theo}
\label{theorem-encadrement}
Under Assumptions $(H_0)$ and $(H_{\varepsilon})$, there exists a positive constant $c$  (depending only upon $\underline t$) such that
\begin{equation}
\label{encadrement}
0 \leq \JS{p^{\star}}{p_{\bar \theta}}-\JS{p^{\star}}{p_{\theta^{\star}}}\leq c \varepsilon^2.
\end{equation}
\end{theo}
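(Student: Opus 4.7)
The plan is as follows. The lower inequality $\JS{p^{\star}}{p_{\bar \theta}} \geq \JS{p^{\star}}{p_{\theta^{\star}}}$ is immediate from the defining property (\ref{PA}) of $\theta^{\star}$ as the global minimizer of $\theta \mapsto \JS{p^{\star}}{p_{\theta}}$ over $\Theta$, so the real content is the upper bound.

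For the upper bound, I exploit the identity $L(\theta,D_{\theta}^{\star}) = 2\JS{p^{\star}}{p_{\theta}} - \ln 4$ established in Section \ref{OP}, which reduces matters to showing that $L(\bar\theta, D_{\bar\theta}^{\star}) - L(\theta^{\star}, D_{\theta^{\star}}^{\star}) \leq 2c\varepsilon^2$. Let $D \in \mathscr D$ be the approximating discriminator provided by $(H_{\varepsilon})$. I chain four inequalities:
\begin{align*}
L(\bar\theta, D_{\bar\theta}^{\star}) &\leq L(\bar\theta, D) + C\varepsilon^2 \\
&\leq \sup_{D'\in\mathscr D} L(\bar\theta, D') + C\varepsilon^2 \\
&\leq \sup_{D'\in\mathscr D} L(\theta^{\star}, D') + C\varepsilon^2 \\
&\leq \sup_{D'\in\mathscr D_{\infty}} L(\theta^{\star}, D') + C\varepsilon^2 = L(\theta^{\star}, D_{\theta^{\star}}^{\star}) + C\varepsilon^2.
\end{align*}
The second line uses $D\in\mathscr D$, the third uses the defining property of $\bar\theta$, and the fourth uses $\mathscr D \subset \mathscr D_{\infty}$; only the first line (call it $(\star)$) requires real work.

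To prove $(\star)$, I set $\Delta = D - D_{\bar\theta}^{\star}$ and expand the integrand of $L(\bar\theta,D) - L(\bar\theta, D_{\bar\theta}^{\star})$ by second-order Taylor formula applied to $\ln$ around $D_{\bar\theta}^{\star}$. This produces a first-order contribution
$$\int \Delta\Big(\frac{p^{\star}}{D_{\bar\theta}^{\star}} - \frac{p_{\bar\theta}}{1 - D_{\bar\theta}^{\star}}\Big){\rm d}\mu$$
plus a quadratic remainder in $\Delta$. The explicit formula (\ref{optimD}) yields $p^{\star}/D_{\bar\theta}^{\star} = p_{\bar\theta}/(1 - D_{\bar\theta}^{\star}) = p^{\star} + p_{\bar\theta}$, so the first-order term vanishes identically — this is exactly the first-order optimality of $D_{\bar\theta}^{\star}$ for $L(\bar\theta,\cdot)$ and is the ingredient that upgrades a naive $O(\varepsilon)$ estimate to $O(\varepsilon^2)$. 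The remainder is an integral of $\Delta^2$ against expressions of the form $p^{\star}/\xi^2 + p_{\bar\theta}/(1-\eta)^2$ for intermediate values $\xi,\eta$. Assumptions $(H_0)$ and $(H_{\varepsilon})$ (together with $\varepsilon < \underline t$) force both $D_{\bar\theta}^{\star}$ and $D$ to take values in $[\underline t - \varepsilon, 1 - \underline t + \varepsilon]$, so the intermediate values stay in the same interval and the denominators are uniformly bounded below. Using $\int p^{\star}{\rm d}\mu = \int p_{\bar\theta}{\rm d}\mu = 1$, the remainder is controlled by $\varepsilon^2/(\underline t - \varepsilon)^2$; after reducing without loss of generality to $\varepsilon \leq \underline t/2$ (otherwise the bound $c\varepsilon^2$ is absorbed into the trivial estimate $\JS{p^{\star}}{p_{\bar\theta}} \leq \ln 2$), one obtains a constant $C$ depending only on $\underline t$.

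The main obstacle is $(\star)$: carefully carrying out the Taylor expansion so that the first-order cancellation is transparent, and packaging the denominator bounds so that the final constant depends on $\underline t$ only. The rest of the argument is an elementary chain of optimality inequalities following from the very definitions of $\bar\theta$, $D_{\bar\theta}^{\star}$, and $D_{\theta^{\star}}^{\star}$.
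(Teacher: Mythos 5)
Your proposal is correct and follows essentially the same route as the paper's proof: the same chain of optimality inequalities ($D\in\mathscr D$, definition of $\bar\theta$, $\mathscr D\subset\mathscr D_{\infty}$) combined with a Taylor expansion of $D\mapsto L(\bar\theta,D)$ around $D^{\star}_{\bar\theta}$, whose first-order term vanishes because $p^{\star}/D^{\star}_{\bar\theta}=p_{\bar\theta}/(1-D^{\star}_{\bar\theta})=p^{\star}+p_{\bar\theta}$, with $(H_0)$ and $\varepsilon<\underline t$ keeping the denominators away from zero. The only (minor) deviations are cosmetic and in your favor: you use a second-order Lagrange remainder instead of the paper's third-order integral remainder, and your reduction to $\varepsilon\leq\underline t/2$ (absorbing the case $\varepsilon>\underline t/2$ into the trivial bound $\JS{p^{\star}}{p_{\bar\theta}}\leq\ln 2$) cleanly guarantees a constant depending only on $\underline t$, whereas the paper's remainder factor $(1-\varepsilon/\underline t)^{-3}$ degenerates as $\varepsilon\uparrow\underline t$ and is not explicitly dealt with there.
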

This theorem points out that if the class $\mathscr D$ is rich enough to approximate the discriminator $D_{\bar \theta}^{\star}$ in such a way that $\|D-D_{\bar \theta}^{\star}\|_{\infty}\leq \varepsilon$ for some small $\varepsilon$, then replacing $\JS{p^{\star}}{p_{\theta^{\star}}}$ by $\JS{p^{\star}}{p_{\bar \theta}}$ has an impact which is not larger than a ${\rm O}(\varepsilon^2)$ factor. It shows in particular that the Jensen-Shannon divergence is a suitable criterion for the problem we are examining.
\section{Statistical analysis}
\label{SA}
The data-dependent parameter $\hat \theta$ achieves the infimum of the adversarial problem (\ref{GAN-likelihood}). Practically speaking, it is this parameter that will be used in the end for producing fake data, via the associated generator $G_{\hat \theta}$. We first study in Subsection \ref{AP1} the large sample properties of the distribution $p_{\hat \theta}$ via the criterion $\JS{p^{\star}}{p_{\hat \theta}}$, and then state in Subsection \ref{AP2} the almost sure convergence and asymptotic normality of the parameter $\hat \theta$ as the sample size $n$ tends to infinity. Throughout, the parameter sets $\Theta$ and $\Lambda$ are assumed to be compact subsets of $\mathds R^p$ and $\mathds R^q$, respectively. To simplify the analysis, we also assume that $\mu(E)<\infty$.
\subsection{Asymptotic properties of $\JS{p^{\star}}{p_{\hat \theta}}$}
\label{AP1}
As for now, we assume that we have at hand a parametric family of generators $\mathscr G=\{G_{\theta}\}_{\theta \in \Theta}$, $\Theta \subset \mathds R^p$, and a parametric family of discriminators $\mathscr D=\{D_{\alpha}\}_{\alpha \in \Lambda}$, $\Lambda \subset \mathds R^q$. We recall that the collection of probability densities associated with $\mathscr G$ is $\mathscr P=\{p_{\theta}\}_{\theta \in \Theta}$, where $G_{\theta}(Z)\stackrel{\mathscr L}{=}p_{\theta}{\rm d}\mu$ and $Z$ is some low-dimensional noise random variable. In order to avoid any confusion, for a given discriminator $D=D_{\alpha}$ we use the notation $\hat L(\theta,\alpha)$ (respectively, $L(\theta,\alpha)$)
instead of $\hat L(\theta,D)$ (respectively, $L(\theta,D)$) when useful. So,
$$
\hat L(\theta,\alpha)=\sum_{i=1}^n \ln D_{\alpha}(X_i)+ \sum_{i=1}^n\ln(1-D_{\alpha}\circ G_{\theta}(Z_i)),
$$
and
$$
L(\theta,\alpha)=\int \ln(D_{\alpha}) p^{\star}{\rm d}\mu+\int \ln (1-D_{\alpha}) p_{\theta}{\rm d}\mu.
$$
We will need the following regularity assumptions:

{\bf Assumptions} $(H_{\text{reg}})$
\begin{enumerate}
\item[$(H_D)$] There exists $\kappa \in (0,1/2)$ such that, for all $\alpha \in \Lambda$, $\kappa \leq D_{\alpha}\leq 1-\kappa$. In addition, the function $(x,\alpha) \mapsto D_{\alpha}(x)$ is of class $C^1$, with a uniformly bounded differential.
\item[$(H_G)$] For all $z \in \mathds R^{d'}$, the function $\theta \mapsto G_{\theta}(z)$ is of class $C^1$, uniformly bounded, with a uniformly bounded differential.
\item[$(H_p)$] For all $x \in E$, the function $\theta \mapsto p_{\theta}(x)$ is of class $C^1$, uniformly bounded, with a uniformly bounded differential.
\end{enumerate}
Note that under $(H_D)$, all discriminators in $\{D_{\alpha}\}_{\alpha \in \Lambda}$ are $\theta$-admissible, whatever $\theta$. All of these requirements are classic regularity conditions for statistical models, which imply in particular that the functions $\hat L(\theta,\alpha)$ and $L(\theta,\alpha)$ are continuous. Therefore, the compactness of $\Theta$ guarantees that $\hat \theta$ and $\bar \theta$ exists. Conditions for the existence of $\theta^{\star}$ are given in Theorem \ref{E+U}.

We have known since Theorem \ref{theorem-encadrement} that if the available class of discriminators $\mathscr D$ approaches the optimal discriminator $D_{\bar \theta}^{\star}$ by a distance not more than $\varepsilon$, then $\JS{p^{\star}}{p_{\bar \theta}}-\JS{p^{\star}}{p_{\theta^{\star}}}={\rm O}(\varepsilon^2)$. It is therefore reasonable to expect that, asymptotically, the difference $\JS{p^{\star}}{p_{\hat \theta}}-\JS{p^{\star}}{p_{\theta^{\star}}}$ will not be larger than a term proportional to $\varepsilon^2$, in some probabilistic sense. This is precisely the result of Theorem \ref{rate} below. In fact, most articles to date have focused on the development and analysis of optimization procedures (typically, stochastic-gradient-type algorithms) to compute $\hat \theta$, without really questioning its convergence properties as the data set grows. Although our statistical results are theoretical in nature, we believe that they are complementary to the optimization literature, insofar as they offer guarantees on the validity of the algorithms.

In addition to the regularity hypotheses and Assumption $(H_0)$, we will need the following requirement, which is a stronger version of $(H_{\varepsilon})$:

{\bf Assumption} $(H_{\varepsilon}')$  There exists $\varepsilon \in (0,\underline t)$ such that: for all $\theta \in \Theta$, there exists $D \in \mathscr D$, a $\theta$-admissible discriminator,  such that $\|D-D_{\theta}^{\star}\|_{\infty}\leq \varepsilon$.

We are ready to state our first statistical theorem.
\begin{theo}
\label{rate}
Under Assumptions $(H_0)$, $(H_{\emph{reg}})$, and $(H_{\varepsilon}')$, one has
$$\mathds E \JS{p^{\star}}{p_{\hat \theta}}-\JS{p^{\star}}{p_{\theta^{\star}}}={\rm O}\Big(\varepsilon^2 +\frac{1}{\sqrt n}\Big).$$
\end{theo}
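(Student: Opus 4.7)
The strategy is a classical bias/variance split: introduce the deterministic parameter $\bar\theta$ from Section~\ref{AP} and write
\begin{equation*}
\JS{p^{\star}}{p_{\hat\theta}}-\JS{p^{\star}}{p_{\theta^{\star}}}
=\bigl[\JS{p^{\star}}{p_{\hat\theta}}-\JS{p^{\star}}{p_{\bar\theta}}\bigr]
+\bigl[\JS{p^{\star}}{p_{\bar\theta}}-\JS{p^{\star}}{p_{\theta^{\star}}}\bigr].
\end{equation*}
The second (deterministic) bracket is already $O(\varepsilon^2)$ by Theorem~\ref{theorem-encadrement}, so it remains to show that the expectation of the first (estimation) bracket is $O(\varepsilon^2+n^{-1/2})$.

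Using $L(\theta,D^{\star}_\theta)=2\JS{p^{\star}}{p_\theta}-\ln 4$, this amounts to bounding $\mathds E[L(\hat\theta,D^{\star}_{\hat\theta})-L(\bar\theta,D^{\star}_{\bar\theta})]$. Introduce the profile criteria
\begin{equation*}
\psi(\theta)=\sup_{\alpha\in\Lambda}L(\theta,\alpha),\qquad
\hat\psi(\theta)=\tfrac{1}{n}\sup_{\alpha\in\Lambda}\hat L(\theta,\alpha),
\end{equation*}
so that $\bar\theta$ minimises $\psi$ and $\hat\theta$ minimises $\hat\psi$, and decompose
\begin{equation*}
L(\hat\theta,D^{\star}_{\hat\theta})-L(\bar\theta,D^{\star}_{\bar\theta})
=\bigl[L(\hat\theta,D^{\star}_{\hat\theta})-\psi(\hat\theta)\bigr]
+\bigl[\psi(\hat\theta)-\psi(\bar\theta)\bigr]
+\bigl[\psi(\bar\theta)-L(\bar\theta,D^{\star}_{\bar\theta})\bigr].
\end{equation*}
The third bracket is $\leq 0$ since $\mathscr D\subset\mathscr D_\infty$. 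The first bracket is treated by the second-order argument underlying Theorem~\ref{theorem-encadrement}, applied pointwise at $\hat\theta$: $(H_{\varepsilon}')$ yields $D\in\mathscr D$ with $\|D-D^{\star}_{\hat\theta}\|_\infty\leq\varepsilon$, and because $D^{\star}_{\hat\theta}(x)$ maximises $d\mapsto \ln(d)p^{\star}(x)+\ln(1-d)p_{\hat\theta}(x)$ at each $x$, the first-order Taylor term in $L(\hat\theta,D)-L(\hat\theta,D^{\star}_{\hat\theta})$ vanishes after integration, leaving a quadratic remainder bounded by $O(\varepsilon^2)$ uniformly in $\hat\theta$ thanks to $(H_0)$ (which keeps $D^{\star}_{\hat\theta}$ bounded away from $\{0,1\}$).

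For the middle bracket, a standard M-estimator argument gives
\begin{equation*}
\psi(\hat\theta)-\psi(\bar\theta)
\leq 2\sup_{\theta\in\Theta}|\psi(\theta)-\hat\psi(\theta)|
\leq 2\sup_{(\theta,\alpha)\in\Theta\times\Lambda}\bigl|L(\theta,\alpha)-\tfrac{1}{n}\hat L(\theta,\alpha)\bigr|,
\end{equation*}
using $|\sup A-\sup B|\leq\sup|A-B|$ and optimality of $\hat\theta$ for $\hat\psi$. Splitting $L-\tfrac{1}{n}\hat L$ into its $X$-part (indexed by $\alpha$) and its $Z$-part (indexed by $(\theta,\alpha)$), under $(H_{\text{reg}})$ both integrands are uniformly bounded (via $\kappa\leq D_\alpha\leq 1-\kappa$) and Lipschitz in their parameters on the compact set $\Theta\times\Lambda$; standard bracketing/covering entropy bounds then give $\mathds E\sup|\cdot|=O(n^{-1/2})$. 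Combining the three terms with the approximation bracket yields the theorem. The most delicate step I expect is precisely this uniform empirical process bound on the $Z$-part: its summands couple each $Z_i$ with the full pair $(\theta,\alpha)$, so the Lipschitz envelope must be produced by combining $(H_G)$ with $(H_D)$ rather than $(H_D)$ alone; everything else is a careful transplant of the second-order argument of Theorem~\ref{theorem-encadrement} from $\bar\theta$ to $\hat\theta$, which is exactly why the stronger hypothesis $(H_{\varepsilon}')$ is needed.
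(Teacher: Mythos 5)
Your proposal is correct and follows essentially the same route as the paper: the key ingredients — applying the second-order approximation argument of Theorem \ref{theorem-encadrement} at the random $\hat\theta$ (exactly why $(H_{\varepsilon}')$ is needed), using the minimizing property of $\hat\theta$ to reduce the stochastic part to $\sup_{\theta\in\Theta,\alpha\in\Lambda}|n^{-1}\hat L(\theta,\alpha)-L(\theta,\alpha)|$, and bounding its expectation by ${\rm O}(n^{-1/2})$ via a chaining/entropy bound for the bounded, parameter-Lipschitz class on the compact set $\Theta\times\Lambda$ — are precisely those of the paper's proof. The only differences are cosmetic: the paper bypasses $\bar\theta$ by bounding $\inf_{\theta}\sup_{\alpha}L(\theta,\alpha)\leq\inf_{\theta}\sup_{D\in\mathscr D_{\infty}}L(\theta,D)=2\JS{p^{\star}}{p_{\theta^{\star}}}-\ln 4$ directly (your detour through $\bar\theta$ and Theorem \ref{theorem-encadrement} merely adds a harmless extra ${\rm O}(\varepsilon^2)$), and it closes by recalling $\JS{p^{\star}}{p_{\theta^{\star}}}\leq\JS{p^{\star}}{p_{\hat\theta}}$ from (\ref{PA}) so that the one-sided estimate indeed yields the stated ${\rm O}(\varepsilon^2+n^{-1/2})$.
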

\begin{proof}
Fix $\varepsilon  \in (0,\underline t)$ as in Assumption $(H'_{\varepsilon})$, and choose $\hat D \in \mathscr D$, a $\hat \theta$-admissible discriminator, such that $\|\hat D-D_{\hat \theta}^{\star}\|_{\infty}\leq \varepsilon$. By repeating the arguments of the proof of Theorem \ref{theorem-encadrement} (with $\hat \theta$ instead of $\bar \theta$), we conclude that there exists a constant $c_1>0$ such that
$$
2\JS{p^{\star}}{p_{\hat \theta}} \leq c_1 \varepsilon^2+L(\hat \theta,\hat D)+\ln 4 \leq c_1 \varepsilon^2+\sup_{\alpha \in \Lambda} L(\hat \theta,\alpha)+\ln 4.
$$
Therefore,
\begin{align*}
2 \JS{p^{\star}}{p_{\hat \theta}} & \leq c_1\varepsilon^2+\sup_{\theta \in \Theta,\alpha \in \Lambda}|\hat L(\theta,\alpha)-L(\theta,\alpha)|+\sup_{\alpha \in \Lambda}\hat L(\hat \theta,\alpha)+\ln 4\\
&=c_1\varepsilon^2+\sup_{\theta \in \Theta,\alpha \in \Lambda}|\hat L(\theta,\alpha)-L(\theta,\alpha)|+\inf_{\theta \in \Theta}\sup_{\alpha \in \Lambda}\hat L(\theta,\alpha)+\ln 4\\
& \quad \mbox{(by definition of $\hat \theta$)}\\
& \leq c_1\varepsilon^2+2\sup_{\theta \in \Theta,\alpha \in \Lambda}|\hat L(\theta,\alpha)-L(\theta,\alpha)|+\inf_{\theta \in \Theta}\sup_{\alpha \in \Lambda}L(\theta,\alpha)+\ln 4.
\end{align*}
So,
\begin{align*}
2 \JS{p^{\star}}{p_{\hat \theta}} &\leq c_1\varepsilon^2+2\sup_{\theta \in \Theta,\alpha \in \Lambda}|\hat L(\theta,\alpha)-L(\theta,\alpha)|+\inf_{\theta \in \Theta}\sup_{D \in \mathscr D_{\infty}}L(\theta,D)+\ln 4\\
&=c_1\varepsilon^2+2\sup_{\theta \in \Theta,\alpha \in \Lambda}|\hat L(\theta,\alpha)-L(\theta,\alpha)|+L(\theta^{\star},D^{\star}_{\theta^{\star}})+\ln 4\\
& \quad \mbox{(by definition of $\theta^{\star}$)}\\
&=c_1\varepsilon^2+2 \JS{p^{\star}}{p_{\theta^{\star}}}+2\sup_{\theta \in \Theta,\alpha \in \Lambda}|\hat L(\theta,\alpha)-L(\theta,\alpha)|.
\end{align*}
Thus, letting $c_2=c_1/2$, we have
\begin{equation}
\label{inter1}
\JS{p^{\star}}{p_{\hat \theta}}-\JS{p^{\star}}{p_{\theta^{\star}}}\leq c_2\varepsilon^2+\sup_{\theta \in \Theta,\alpha \in \Lambda}|\hat L(\theta,\alpha)-L(\theta,\alpha)|.
\end{equation}
Clearly, under Assumptions $(H_D)$, $(H_G)$, and $(H_p)$, the process $(\hat L(\theta,\alpha)-L(\theta,\alpha))_{\theta \in \Theta,\alpha \in \Lambda}$ is subgaussian \citep[e.g.,][Chapter 5]{Ha16} for the distance $d=\|\cdot\|/\sqrt{n}$, where $\|\cdot\|$ is the standard Euclidean norm on $\mathds R^p\times \mathds R^q$. Let $N(\Theta \times \Lambda,\|\cdot\|,u)$ denote the $u$-covering number of $\Theta \times \Lambda$ for the distance $\|\cdot\|$. Then, by Dudley's inequality \citep[][Corollary 5.25]{Ha16},
\begin{equation}
\label{Dudley}
\mathds E \sup_{\theta \in \Theta,\alpha \in \Lambda}|\hat L(\theta,\alpha)-L(\theta,\alpha)| \leq \frac{12}{\sqrt n}\int_0^{\infty}\sqrt{\ln (N(\Theta \times \Lambda,\|\cdot\|,u))}{\rm d}u.
\end{equation}
Since $\Theta$ and $\Lambda$ are bounded, there exists $r>0$ such that $N(\Theta \times \Lambda,\|\cdot\|,u)=1$ for $u \geq r$ and
$$N(\Theta \times \Lambda,\|\cdot\|,u)={\rm O}\Big(\Big(\frac{1}{u}\Big)^{p+q}\Big) \quad \mbox{for }u< r.$$
Combining this inequality with (\ref{inter1}) and (\ref{Dudley}), we obtain
$$\mathds E \JS{p^{\star}}{p_{\hat \theta}}-\JS{p^{\star}}{p_{\theta^{\star}}}\leq c_3\Big(\varepsilon^2+\frac{1}{\sqrt n}\Big),$$
for some positive constant $c_3$. The conclusion follows by observing that, by (\ref{PA}),
$$\JS{p^{\star}}{p_{\theta^{\star}}} \leq \JS{p^{\star}}{p_{\hat \theta}}.$$
\end{proof}
Theorem \ref{rate} is illustrated in Figure \ref{img:comparison_of_dvgs_wrt_number_of_layers}, which shows the approximate values of $\mathds E \JS{p^{\star}}{p_{\hat \theta}}$. We took $p^{\star}(x)= \frac{e^{-x/s}}{s(1+e^{-x/s})^2}$ (centered logistic density with scale parameter $s=0.33$), and let $\mathscr G$ and $\mathscr D$ be two fully connected neural networks parameterized by weights and offsets. The noise random variable $Z$ follows a uniform distribution on $[0, 1]$, and the parameters of $\mathscr G$ and $\mathscr D$ are chosen in a sufficiently large compact set. In order to illustrate the impact of $\varepsilon$ in Theorem \ref{rate}, we fixed the sample size to a large $n=100\,000$ and varied the number of layers of the discriminators from 2 to 5, keeping in mind that a larger number of layers results in a smaller $\varepsilon$. To diversify the setting, we also varied the number of layers of the generators from 2 to 3.
The expectation $\mathds E \JS{p^{\star}}{p_{\hat \theta}}$ was estimated by averaging over 30 repetitions (the number of runs has been reduced for time complexity limitations). Note that we do not pay attention to the exact value of the constant term $\JS{p^{\star}}{p_{\theta^{\star}}}$, which is intractable in our setting.
\begin{figure}[!!h]
  \center
  \includegraphics[width=1\textwidth]{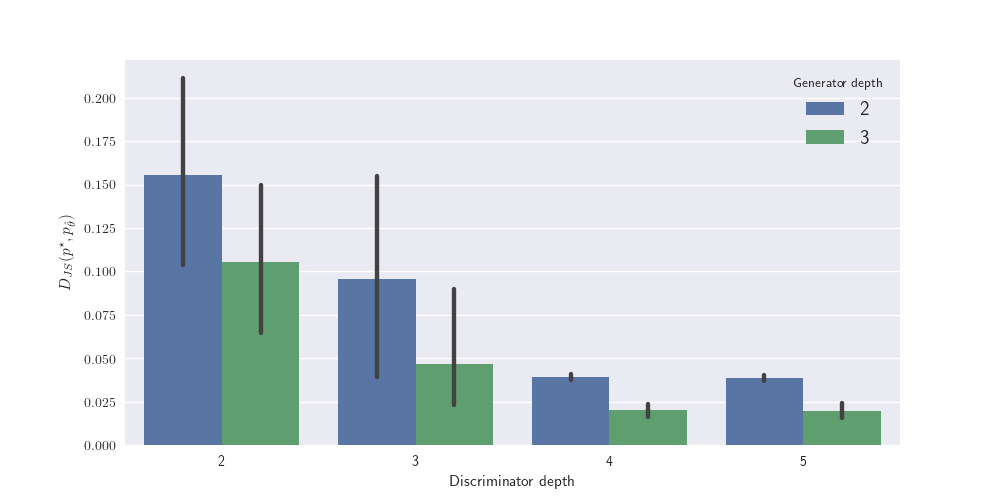}
  \caption{Bar plots of the Jensen-Shannon divergence $\JS{p^{\star}}{p_{\hat \theta}}$ with respect to the number of layers (depth) of both the discriminators and generators. The height of each rectangle estimates $\mathds E \JS{p^{\star}}{p_{\hat \theta}}.$
 } 
  \label{img:comparison_of_dvgs_wrt_number_of_layers}
\end{figure}

Figure \ref{img:comparison_of_dvgs_wrt_number_of_layers} highlights that $\mathds E \JS{p^{\star}}{p_{\hat \theta}}$ approaches the constant value $\JS{p^{\star}}{p_{\theta^{\star}}}$ as $\varepsilon \downarrow 0$, i.e., as the discriminator depth increases, given that the contribution of $1/\sqrt n$ is certainly negligible for $n=100\,000$. Figure \ref{img:examples_of_fitting} shows the target density $p^{\star}$ vs.~the histograms and kernel estimates of $100\,000$ data sampled from $G_{\hat \theta}(Z)$, in the two cases: (discriminator depth = 2, generator depth = 3) and (discriminator depth = 5, generator depth = 3). In accordance with the decrease of $\mathds E \JS{p^{\star}}{p_{\hat\theta}}$, the estimation of the true distribution $p^{\star}$ improves when $\varepsilon$ becomes small.
\begin{figure}[!!h]
    \center
    \subfloat[Discriminator depth = 2, generator depth = 3.]{{\includegraphics[width=8cm,height=4cm]{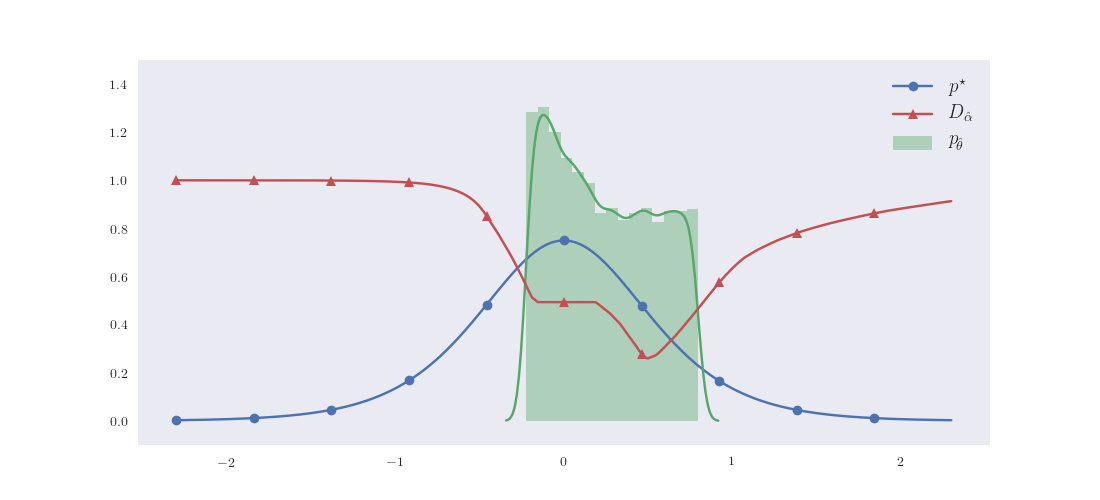}}}%
    \subfloat[Discriminator depth = 5, generator depth = 3.]{{\includegraphics[width=8cm,height=4cm]{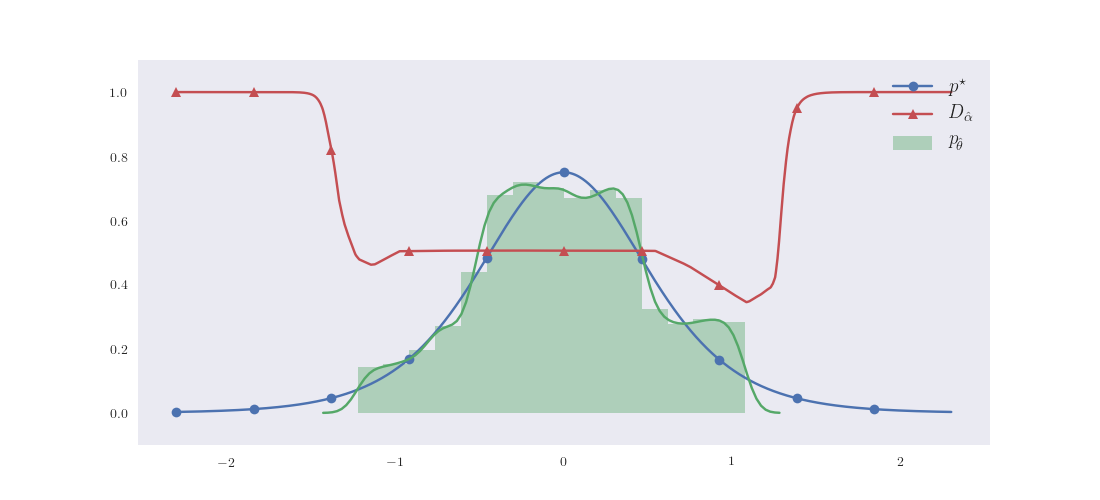}}}%
    \caption{True density $p^\star$, histograms, and kernel estimates (continuous line) of $100\,000$ data sampled from $G_{\hat \theta}(Z)$. Also shown is the final discriminator $D_{\hat \alpha}$.}
    \label{img:examples_of_fitting}%
\end{figure}

\paragraph{Some comments on the optimization scheme.} Numerical optimization is quite a tough point for GANs, partly due to nonconvex-concavity of the saddle point problem described in equation (\ref{GAN-likelihood}) and the nondifferentiability of the objective function. This motivates a very active line of research \citep[e.g.,][]{GoPoMiXuWaOzCoBe14,NoCsTo16,ArChBo17,ArBo17}, which aims at transforming the objective into a more convenient function and devising efficient algorithms. In the present paper, since we are interested in original GANs, the algorithmic approach described by \citet{GoPoMiXuWaOzCoBe14} is adopted, and numerical optimization is performed thanks to the machine learning framework \verb+TensorFlow+, working with gradient descent based on automatic differentiation. As proposed by \citet{GoPoMiXuWaOzCoBe14}, the objective function $\theta \mapsto \sup_{\alpha \in \Lambda} \hat L (\theta, \alpha)$ is not directly minimized. We used instead an alternated procedure, which consists in iterating (a few hundred times in our examples) the following two steps:
  \begin{enumerate}[$(i)$]
    \item For a fixed value of $\theta$ and from a given value of $\alpha$, perform 10 ascent steps on
    $\hat L (\theta, \cdot)$;
    \item For a fixed value of $\alpha$ and from a given value of $\theta$, perform 1 descent step on
     $\theta \mapsto -\sum_{i=1}^n \ln(D_\alpha \circ G_{\theta}(Z_i))$ (instead of $\theta \mapsto \sum_{i=1}^n \ln(1-D_\alpha \circ G_{\theta}(Z_i))$).
  \end{enumerate}
This alternated procedure is motivated by two reasons. First, for a given $\theta$, approximating $\sup_{\alpha \in \Lambda} \hat L (\theta, \alpha)$ is computationally prohibitive and may result in overfitting the finite training sample \citep{GoPoMiXuWaOzCoBe14}. This can be explained by the shape of the function $\theta\mapsto \sup_{\alpha \in \Lambda} \hat L (\theta, \alpha)$, which may be almost piecewise constant, resulting in a zero gradient almost everywhere (or at best very low; see \citealp{ArChBo17}). Next,
empirically, $-\ln(D_\alpha \circ G_{\theta}(Z_i))$ provides bigger gradients than $\ln(1-D_\alpha \circ G_{\theta}(Z_i))$, resulting in a more powerful algorithm than the original version, while leading to the same minimizers.

In all our experiments, the learning rates needed in gradient steps were fixed and tuned by hand, in order to prevent divergence. In addition, since our main objective is to focus on illustrating the statistical properties of GANs rather than delving into optimization issues, we decided to perform mini-batch gradient updates instead of stochastic ones (that is, new observations of $X$ and $Z$ are not sampled at each step of the procedure).  This is different of what is done in the original algorithm of \citet{GoPoMiXuWaOzCoBe14}. 

We realize that our numerical approach---although widely adopted by the machine learning community---may fail to locate the desired estimator $\hat \theta$ (i.e., the exact minimizer in $\theta$ of $\sup_{\alpha \in \Lambda} \hat L (\theta, \alpha)$) in more complex contexts than those presented in the present paper. It is nevertheless sufficient for our objective, which is limited to illustrating the theoretical results with a few simple examples.
\subsection{Asymptotic properties of $\hat \theta$}
\label{AP2}
Theorem \ref{rate} states a result relative to the criterion $\JS{p^{\star}}{p_{\hat \theta}}$. We now examine the convergence properties of the parameter $\hat \theta$ itself as the sample size $n$ grows. We would typically like to find reasonable conditions ensuring that $\hat \theta \to \bar \theta$ almost surely as $n\to \infty$. To reach this goal, we first need to strengthen a bit the Assumptions $(H_{\text{reg}})$, as follows:

{\bf Assumptions} $(H'_{\text{reg}})$
\begin{enumerate}
\item[$(H'_D)$] There exists $\kappa\in (0,1/2)$ such that, for all $\alpha \in \Lambda$, $\kappa \leq D_{\alpha}\leq 1-\kappa$. In addition, the function $(x,\alpha)\mapsto D_{\alpha}(x)$ is of class $C^2$, with differentials of order 1 and 2 uniformly bounded. 
\item[$(H'_G)$] For all $z \in \mathds R^{d'}$, the function $\theta \mapsto G_{\theta}(z)$ is of class $C^2$, uniformly bounded, with differentials of order 1 and 2 uniformly bounded.
\item[$(H'_p)$] For all $x \in E$, the function $\theta \mapsto p_{\theta}(x)$ is of class $C^2$, uniformly bounded, with differentials of order 1 and 2 uniformly bounded.
\end{enumerate}
It is easy to verify that under these assumptions the partial functions $\theta \mapsto \hat L(\theta,\alpha)$ (respectively, $\theta \mapsto L(\theta,\alpha)$) and $\alpha\mapsto \hat L(\theta,\alpha)$ (respectively, $\alpha \mapsto L(\theta,\alpha)$) are of class $C^2$. Throughout, we let $\theta=(\theta_1, \hdots, \theta_p)$, $\alpha=(\alpha_1, \hdots, \alpha_q)$, and denote by $\frac{\partial}{\partial \theta_i}$ and $\frac{\partial}{\partial\alpha_j}$ the partial derivative operations with respect to $\theta_i$ and $\alpha_j$. The next lemma will be of constant utility. In order not to burden the text, its proof is given in Section \ref{STL}.
\begin{lem}
\label{differentials}
Under Assumptions $(H'_{\emph{reg}})$, $\forall (a,b,c,d) \in \{0,1,2\}^4$ such that $a+b \leq 2$ and $c+d\leq 2$, one has
$$\sup_{\theta \in \Theta,\alpha \in \Lambda}\bigg| \frac{\partial ^{a+b+c+d}}{\partial \theta_i^a \partial \theta_j^b \partial \alpha_{\ell}^c \partial \alpha_m^d} \hat L(\theta,\alpha)-\frac{\partial ^{a+b+c+d}}{\partial \theta_i^a \partial \theta_j^b \partial \alpha_{\ell}^c \partial \alpha _m^d}L(\theta,\alpha)\bigg| \to 0 \quad \mbox{almost surely},$$
for all $(i,j) \in \{1, \hdots, p\}^2$ and $(\ell,m) \in \{1, \hdots, q\}^2$.
\end{lem}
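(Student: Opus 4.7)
The plan is to cast the lemma as a uniform strong law of large numbers for each mixed partial derivative of the criterion, applied on the compact parameter set $\Theta\times\Lambda$. Writing $\varphi(x,z;\theta,\alpha) := \ln D_\alpha(x) + \ln(1 - D_\alpha(G_\theta(z)))$, one has $\hat L(\theta,\alpha) = \sum_{i=1}^n \varphi(X_i,Z_i;\theta,\alpha)$ and $L(\theta,\alpha) = n\,\mathds E\,\varphi(X_1,Z_1;\theta,\alpha)$ (with the common normalization understood). First I would check that under $(H'_{\text{reg}})$ every partial derivative of $\varphi$ appearing in the statement is continuous and uniformly bounded in $(x,z,\theta,\alpha)$, so that differentiation commutes with the finite sum trivially and with the expectation by dominated convergence. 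The claim then reduces, for each admissible tuple $(a,b,c,d)$, to showing that the empirical mean of $h_{a,b,c,d}(X_i,Z_i;\theta,\alpha) := \frac{\partial^{a+b+c+d}}{\partial \theta_i^a \partial \theta_j^b \partial \alpha_\ell^c \partial \alpha_m^d}\varphi(X_i,Z_i;\theta,\alpha)$ converges almost surely, uniformly on $\Theta\times\Lambda$, to $\mathds E\,h_{a,b,c,d}(X_1,Z_1;\theta,\alpha)$.

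Next I would establish two estimates on $h_{a,b,c,d}$. The chain and product rules expand it into a finite polynomial in the factors $1/D_\alpha(x)$ and $1/(1-D_\alpha(G_\theta(z)))$, in partial derivatives of $D_\alpha$ in $(x,\alpha)$ of the relevant orders, and in partial derivatives of $G_\theta$ in $\theta$ of order at most $a+b\leq 2$. The bound $\kappa\leq D_\alpha \leq 1-\kappa$ from $(H'_D)$ controls the denominators by $1/\kappa$, while the uniform boundedness of the relevant derivatives in $(H'_D)$ and $(H'_G)$ controls every other factor, yielding a constant $M>0$ with $\sup|h_{a,b,c,d}|\leq M$. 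Differentiating the same expansion one further step and reapplying these bounds yields a Lipschitz constant $K$ such that $|h_{a,b,c,d}(x,z;\theta,\alpha) - h_{a,b,c,d}(x,z;\theta',\alpha')| \leq K(\|\theta-\theta'\|+\|\alpha-\alpha'\|)$, uniformly in $(x,z)$.

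Armed with uniform boundedness and uniform Lipschitzianity on the compact set $\Theta\times\Lambda$, I would then conclude by a classical covering argument. For each $\eta>0$, extract a finite $\eta$-net $\{(\theta_k,\alpha_k)\}_{k=1}^{N_\eta}$; the ordinary strong law of large numbers applied at each node, and intersected over the finite collection, gives almost sure pointwise convergence at all the $(\theta_k,\alpha_k)$; the Lipschitz bound then propagates this convergence to the whole of $\Theta\times\Lambda$, yielding $\limsup_n \sup_{(\theta,\alpha)} |\cdots| \leq 2K\eta$ almost surely. Letting $\eta\downarrow 0$ along a countable sequence completes the argument for the fixed tuple, and intersecting over the finitely many admissible tuples $(a,b,c,d)$ delivers the lemma.

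The hard part will be the chain-rule bookkeeping in the second step: one must patiently enumerate the terms that arise when differentiating $\ln(1 - D_\alpha(G_\theta(z)))$ up to twice in $\theta$ and twice in $\alpha$, and verify that every factor remains bounded and Lipschitz under $(H'_{\text{reg}})$. Everything else is a standard USLLN scheme.
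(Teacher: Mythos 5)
Your proposal is correct, but it proves the lemma by a different route than the paper. You run a classical uniform-SLLN scheme: normalize $\hat L$ by $n$, check via the chain rule that each partial derivative of the summand $\varphi$ is uniformly bounded (using $\kappa\leq D_\alpha\leq 1-\kappa$ and the bounded differentials in $(H'_D)$, $(H'_G)$) and Lipschitz in $(\theta,\alpha)$ uniformly in the data, then use a finite $\eta$-net of the compact set $\Theta\times\Lambda$, the ordinary SLLN at the net points, and Lipschitz propagation, letting $\eta\downarrow 0$ along a countable sequence. The paper instead applies McDiarmid's bounded-differences inequality to the random variable $\sup_{\theta,\alpha}|\Delta\hat L-\Delta L|$, concludes by Borel--Cantelli that this supremum concentrates almost surely around its expectation, and then bounds the expectation by ${\rm O}(1/\sqrt n)$ via subgaussianity of the process and Dudley's entropy integral (exactly as in the proof of Theorem \ref{rate}). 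Both arguments rest on the same regularity bookkeeping (boundedness plus a Lipschitz modulus in the parameters of the differentiated summands: this is what makes the paper's process subgaussian for the metric $\|\cdot\|/\sqrt n$, and what gives your constant $K$); the paper's route buys a quantitative rate $\mathds E\sup|\Delta\hat L-\Delta L|={\rm O}(1/\sqrt n)$ and exponential concentration, whereas yours is more elementary and delivers precisely the almost-sure statement of the lemma. Two caveats you should make explicit. First, the normalization: as written in the paper $\hat L$ is an unnormalized sum while $L$ is a single expectation, so the statement only makes sense with $\hat L$ divided by $n$ (you noted this; the paper leaves it implicit). Second, your plan to obtain the Lipschitz constant by ``differentiating the expansion one further step'' is not literally available for the extreme tuples ($a+b=2$ and $c+d=2$, total order four), since $(H'_{\text{reg}})$ only guarantees $C^2$ smoothness with bounded differentials up to order two; either restrict attention to the tuples of total order at most two (the only ones used later, in Theorem \ref{clt} and Lemma \ref{lem-tech1}), or invoke a uniform-continuity modulus instead of a derivative-based Lipschitz bound. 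This is a shared burden rather than a defect of your approach: the paper's subgaussianity claim for the differentiated process implicitly requires the same control.
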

We recall that $\bar \theta \in \Theta$ is such that
$$\sup_{\alpha \in \Lambda}L(\bar \theta,\alpha) \leq \sup_{\alpha \in \Lambda} L(\theta,\alpha), \quad \forall \theta \in \Theta,$$
and insist that $\bar \theta$ exists under $(H'_{\text{reg}})$ by continuity of the function $\theta \mapsto \sup_{\alpha \in \Lambda}L(\theta,\alpha)$. Similarly, there exists $\bar \alpha \in \Lambda$ such that
$$L(\bar \theta,\bar \alpha)\geq L(\bar \theta,\alpha), \quad \forall \alpha \in \Lambda.$$
The following assumption ensures that $\bar \theta$ and $\bar \alpha$ are uniquely defined, which is of course a key hypothesis for our estimation objective. Throughout, the notation $S^{\circ}$ (respectively, $\partial S$) stands for the interior (respectively, the boundary) of the set $S$.

{\bf Assumption $(H_1)$} The pair $(\bar \theta,\bar \alpha)$ is unique and belongs to $\Theta^{\circ} \times \Lambda^{\circ}$.

Finally, in addition to $\hat \theta$, we let $\hat \alpha \in \Lambda$ be such that
$$\hat L(\hat \theta,\hat \alpha) \geq \hat L(\hat \theta,\alpha), \quad \forall \alpha \in \Lambda.$$
\begin{theo}
\label{consistency}
Under Assumptions $(H'_{\emph{reg}})$ and $(H_1)$, one has
$$\hat \theta \to \bar \theta \quad \mbox{almost surely} \quad \mbox{and} \quad \hat \alpha \to \bar \alpha\quad \mbox{almost surely}.$$
\end{theo}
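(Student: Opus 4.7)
The plan is to derive both convergences from a single ingredient---the uniform law of large numbers delivered by Lemma~\ref{differentials}---combined with a standard subsequence argmin/argmax identification that exploits the uniqueness in $(H_1)$.

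First, I would apply Lemma~\ref{differentials} with $a=b=c=d=0$ to obtain
$$\sup_{(\theta,\alpha) \in \Theta \times \Lambda} \bigl|\hat L(\theta,\alpha)-L(\theta,\alpha)\bigr| \to 0 \quad \text{almost surely}.$$
Setting $\hat M(\theta)=\sup_{\alpha \in \Lambda}\hat L(\theta,\alpha)$ and $M(\theta)=\sup_{\alpha \in \Lambda} L(\theta,\alpha)$, the triangle inequality immediately yields $\sup_{\theta \in \Theta}|\hat M(\theta)-M(\theta)|\to 0$ almost surely. Under $(H'_{\text{reg}})$, $L$ is continuous on the compact set $\Theta\times\Lambda$, so $M$ is continuous on $\Theta$, and assumption $(H_1)$ ensures that $\bar\theta$ is the unique minimizer of $M$ over $\Theta$.

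Next I would argue $\hat\theta \to \bar\theta$ almost surely by a standard subsequence identification. On the almost-sure event on which the uniform convergence above holds, compactness of $\Theta$ lets me extract from any subsequence of $(\hat\theta_n)$ a convergent sub-subsequence $\hat\theta_{n_k}\to\theta^\dagger$. The inequality $\hat M(\hat\theta_{n_k})\leq \hat M(\bar\theta)$ combined with uniform convergence $\hat M\to M$ and continuity of $M$ gives $M(\theta^\dagger)\leq M(\bar\theta)$ at the limit; uniqueness then forces $\theta^\dagger=\bar\theta$, so the whole sequence converges. A parallel argument handles $\hat\alpha$: extract a further subsequence with $(\hat\theta_{n_k},\hat\alpha_{n_k})\to(\bar\theta,\alpha^\dagger)$, and for any $\alpha\in\Lambda$ start from $\hat L(\hat\theta_{n_k},\hat\alpha_{n_k})\geq \hat L(\hat\theta_{n_k},\alpha)$. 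Splitting each side via
$$\hat L(\hat\theta_{n_k},\beta)-L(\bar\theta,\beta) = \bigl[\hat L(\hat\theta_{n_k},\beta)-L(\hat\theta_{n_k},\beta)\bigr]+\bigl[L(\hat\theta_{n_k},\beta)-L(\bar\theta,\beta)\bigr]$$
for $\beta\in\{\alpha,\hat\alpha_{n_k}\}$, the first bracket vanishes by uniform convergence and the second by joint continuity of $L$ together with $\hat\theta_{n_k}\to\bar\theta$. Passing to the limit yields $L(\bar\theta,\alpha^\dagger)\geq L(\bar\theta,\alpha)$ for every $\alpha\in\Lambda$, and $(H_1)$ then forces $\alpha^\dagger=\bar\alpha$.

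The main obstacle lies in the second convergence: unlike $\hat\theta$, which minimizes the function $\hat M$ converging uniformly to $M$, the estimator $\hat\alpha$ is the argmax of $\hat L(\hat\theta,\cdot)$---a function evaluated at the \emph{random} point $\hat\theta$. Handling this requires uniform convergence of $\hat L$ to $L$ on the whole product $\Theta\times\Lambda$ (not merely in $\alpha$ at a fixed $\theta$) together with joint continuity of $L$, which is precisely what $(H'_{\text{reg}})$ delivers; the $C^2$ regularity is overkill for this theorem but will be put to full use in the subsequent asymptotic-normality arguments.
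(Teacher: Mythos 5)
Your proof is correct and follows essentially the same route as the paper's: the uniform almost-sure convergence from Lemma \ref{differentials}, compactness, continuity of $L$ (hence of $\theta\mapsto\sup_{\alpha\in\Lambda}L(\theta,\alpha)$), and the uniqueness in $(H_1)$, combined with a subsequence-identification argument. The only cosmetic difference is that you pass to the limit directly in the defining optimality inequalities, whereas the paper first establishes convergence of the optimal values $\sup_{\alpha\in\Lambda}L(\hat\theta,\alpha)$ and $L(\bar\theta,\hat\alpha)$ before identifying the limits; the substance is identical.
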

\begin{proof}
We write
\begin{align*}
&|\sup_{\alpha \in \Lambda} L(\hat \theta,\alpha)-\sup_{\alpha \in \Lambda} L(\bar \theta,\alpha)| \\
& \quad  \leq |\sup_{\alpha \in \Lambda} L(\hat \theta,\alpha)-\sup_{\alpha \in \Lambda} \hat L(\hat \theta,\alpha)|+ |\inf_{\theta \in \Theta}\sup_{\alpha \in \Lambda}\hat L(\theta,\alpha)-\inf_{\theta \in \Theta}\sup_{\alpha \in \Lambda} L(\theta,\alpha)|\\
& \quad \leq 2\sup_{\theta \in \Theta, \alpha \in \Lambda}|\hat L(\theta,\alpha)-L(\theta,\alpha)|.
\end{align*}
Thus, by Lemma \ref{differentials}, $\sup_{\alpha \in \Lambda} L(\hat \theta,\alpha)\to \sup_{\alpha \in \Lambda}L(\bar \theta,\alpha)$ almost surely. In the lines that follow, we make more transparent the dependence of $\hat \theta$ in the sample size $n$ and set $\hat \theta_n\stackrel{\mbox{\tiny{def}}}{=}\hat \theta$. Since $\hat \theta_n \in \Theta$ and $\Theta$ is compact, we can extract from any subsequence of $(\hat \theta_{n})_{n}$ a subsequence $(\hat \theta_{n_k})_{k}$ such that $\hat \theta_{n_k}\to z\in \Theta$ (with $n_k=n_k(\omega)$, i.e., it is almost surely defined). By continuity of the function $\theta \mapsto \sup_{\alpha \in \Lambda }L(\theta,\alpha)$, we deduce that $\sup_{\alpha \in \Lambda}L(\hat \theta_{n_k},\alpha) \to \sup_{\alpha \in \Lambda}L(z,\alpha)$, and so $\sup_{\alpha \in \Lambda}L(z,\alpha)=\sup_{\alpha \in \Lambda}L(\bar \theta,\alpha)$. Since $\bar \theta$ is unique by $(H_1)$, we have $z=\bar \theta$. In conclusion, we can extract from each subsequence of $(\hat \theta_{n})_{n}$ a subsequence that converges towards $\bar \theta$: this shows that $\hat \theta_n \to \bar \theta$ almost surely.

Finally, we have
\begin{align*}
&|L(\bar \theta,\hat \alpha)-L(\bar \theta,\bar \alpha)| \\
& \quad \leq |L(\bar \theta,\hat \alpha)-L(\hat \theta,\hat \alpha)| +|L(\hat \theta,\hat \alpha)-\hat L(\hat \theta,\hat \alpha)|+|\hat L(\hat \theta,\hat \alpha)-L(\bar \theta,\bar \alpha)|\\
& \quad = |L(\bar \theta,\hat \alpha)-L(\hat \theta,\hat \alpha)| +|L(\hat \theta,\hat \alpha)-\hat L(\hat \theta,\hat \alpha)|+|\inf_{\theta \in \Theta}\sup_{\alpha \in \Lambda}\hat L(\theta,\alpha)-\inf_{\theta \in \Theta}\sup_{\alpha \in \Lambda}L(\theta,\alpha)|\\
& \quad \leq \sup_{\alpha \in \Lambda} |L(\bar \theta,\alpha)-L(\hat \theta,\alpha)| +2 \sup_{\theta \in \Theta,\alpha \in \Lambda}|\hat L(\theta,\alpha)-L(\theta,\alpha)|.
\end{align*}
Using Assumptions $(H'_D)$ and $(H'_p)$, and the fact that $\hat \theta \to \bar \theta$ almost surely, we see that the first term above tends to zero. The second one vanishes asymptotically by Lemma \ref{differentials}, and we conclude that $L(\bar \theta,\hat \alpha)\to L(\bar \theta,\bar \alpha)$ almost surely. Since $\hat \alpha \in \Lambda$ and $\Lambda$ is compact, we may argue as in the first part of the proof and deduce from the unicity of $\bar \alpha$ that $\hat \alpha \to \bar \alpha$ almost surely.
\end{proof}
To illustrate the result of Theorem \ref{consistency}, we undertook a series of small numerical experiments with three choices for the triplet (true $p^{\star}$ + generator model $\mathscr P=\{p_{\theta}\}_{\theta \in \Theta}$ + discriminator family $\mathscr D=\{D_{\alpha}\}_{\alpha \in \Lambda}$), which we respectively call the {\bf Laplace-Gaussian}, {\bf Claw-Gaussian}, and {\bf Exponential-Uniform} model. They are summarized in Table \ref{tab:models}. We are aware that more elaborate models (involving, for example, neural networks) can be designed and implemented. However, once again, our objective is not to conduct a series of extensive simulations, but simply to illustrate our theoretical results with a few graphs to get some better intuition.
\begin{table}[!!h]
  \centering
  \begin{tabular}{|lccc|}
    \hline
    \hline
    {\bf Model} & $p^\star$ & $\mathscr P=\{p_\theta\}_{\theta \in \Theta}$ & $\mathscr D=\{D_\alpha\}_{\alpha \in \Lambda}$ \\
    \hline
    {\bf Laplace-Gaussian}
      & $\frac{1}{2 b} e^{-\frac{|x|}{b}}$
      & $\frac{1}{\sqrt{2 \pi} \theta} e^{-\frac{x^2}{2\theta^2}}$
      & $\frac{1}{1 + \frac{\alpha_1}{\alpha_0} e^{\frac{x^2}{2} (\alpha_1^{-2} - \alpha_0^{-2})}}$ \\
      & $b=1.5$
      &$\Theta= [10^{-1},10^3]$
      & $\Lambda = \Theta \times \Theta$\\
      \hline
    {\bf Claw-Gaussian}
      & $p_{\text{claw}}(x)$
      & $\frac{1}{\sqrt{2 \pi} \theta} e^{-\frac{x^2}{2\theta^2}}$
      & $\frac{1}{1 + \frac{\alpha_1}{\alpha_0} e^{\frac{x^2}{2} (\alpha_1^{-2} - \alpha_0^{-2})}}$ \\
      &
      &$\Theta= [10^{-1},10^3]$
      & $\Lambda = \Theta \times \Theta$\\
      \hline
    {\bf Exponential-Uniform}
      & $\lambda e^{-\lambda x}$
      & $\frac 1 \theta \mathbf 1_{[0, \theta]}(x)$
      & $\frac{1}{1 + \frac{\alpha_1}{\alpha_0} e^{\frac{x^2}{2} (\alpha_1^{-2} - \alpha_0^{-2})}}$ \\
      & $\lambda = 1$
      &$\Theta= [10^{-3},10^3]$
      & $\Lambda = \Theta \times \Theta$\\
      \hline
    \hline
  \end{tabular}
  \bigskip
  \caption{Triplets used in the numerical experiments.}
  \label{tab:models}
\end{table}

Figure \ref{img:true_dist} shows the densities $p^\star$. We recall that the claw density on $[0,\infty)$ takes the form
$$
    p_{\text{claw}}
    = \frac 12 \varphi(0, 1) + \frac{1}{10} \big( \varphi(-1, 0.1) + \varphi(-0.5, 0.1) + \varphi(0, 0.1) +
    \varphi(0.5, 0.1) + \varphi(1, 0.1) \big),
$$
where $\varphi(\mu, \sigma)$ is a Gaussian density with mean $\mu$ and standard deviation $\sigma$ (this density is borrowed from \citealp{De97}).
\begin{figure}[!!h]
  \center
  \includegraphics[width=0.6\textwidth]{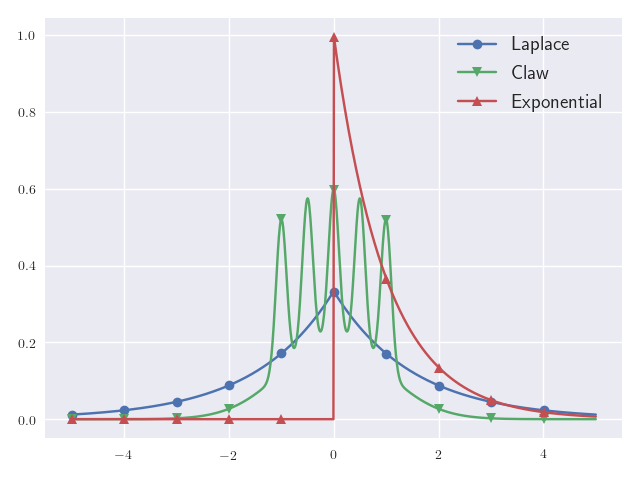}
  \caption{Probability density functions $p^\star$ used in the numerical experiments.}
  \label{img:true_dist}
\end{figure}

In the {\bf Laplace-Gaussian} and {\bf Claw-Gaussian} examples, the densities $p_\theta$ are centered Gaussian, parameterized by their standard deviation parameter $\theta$. The random variable $Z$ is uniform $[0,1]$ and the natural family of generators associated with the model $\mathscr P=\{p_{\theta}\}_{\theta \in \Theta}$ is $\mathscr G=\{G_\theta\}_{\theta \in \Theta}$, where each $G_{\theta}$ is the generalized inverse of the cumulative distribution function of $p_\theta$ (because $G_{\theta}(Z)\stackrel{\mathscr L}{=}p_{\theta}{\rm d}\mu$). The rationale behind our choice for the discriminators is based on the form of the optimal discriminator $D_{\theta}^{\star}$ described in (\ref{optimD}): starting from
$$
    D_{\theta}^{\star} = \frac{p^{\star}}{p{\star}+p_{\theta}}, \quad \theta \in \Theta,
$$
we logically consider the following ratio
 $$
    D_\alpha = \frac{p_{\alpha_1}}{p_{\alpha_1}+p_{\alpha_0}}, \quad \alpha=(\alpha_0,\alpha_1) \in \Lambda=  \Theta \times \Theta.
$$
Figure \ref{img:boxplot_lap-gauss} ({\bf Laplace-Gaussian}), Figure \ref{img:boxplot_claw-gauss} ({\bf Claw-Gaussian}), and Figure \ref{img:boxplot_exp-unif} ({\bf Exponential-Uniform}) show the boxplots of the differences $\hat \theta-\bar \theta$ over 200 repetitions, for a sample size $n$ varying from $10$ to $10\,000$. In these experiments, the parameter $\bar \theta$ is obtained by averaging the $\hat \theta$ for the largest sample size $n$. In accordance with Theorem \ref{consistency}, the size of the boxplots shrinks around 0 when $n$ increases, thus showing that the estimated parameter $\hat \theta$ is getting closer and closer to $\bar \theta$. Before analyzing at which rate this convergence occurs, we may have a look at Figure \ref{img:gen_disc}, which plots the estimated density $p_{\hat \theta}$ (for $n=10\,000$) vs.~the true density $p^\star$. It also shows the discriminator
$D_{\hat \alpha}$, together with the initial density $p_{\theta_{\text{\,init}}}$ and the initial discriminator $D_{\alpha_{\text{\,init}}}$ fed into the optimization algorithm.
We note that in the three models, $D_{\hat \alpha}$ is almost identically $1/2$, meaning that it is impossible to discriminate between the original observations and those generated by $p_{\hat \theta}$.
 \begin{figure}[!!h]
    \center
    \includegraphics[width=0.8\textwidth]{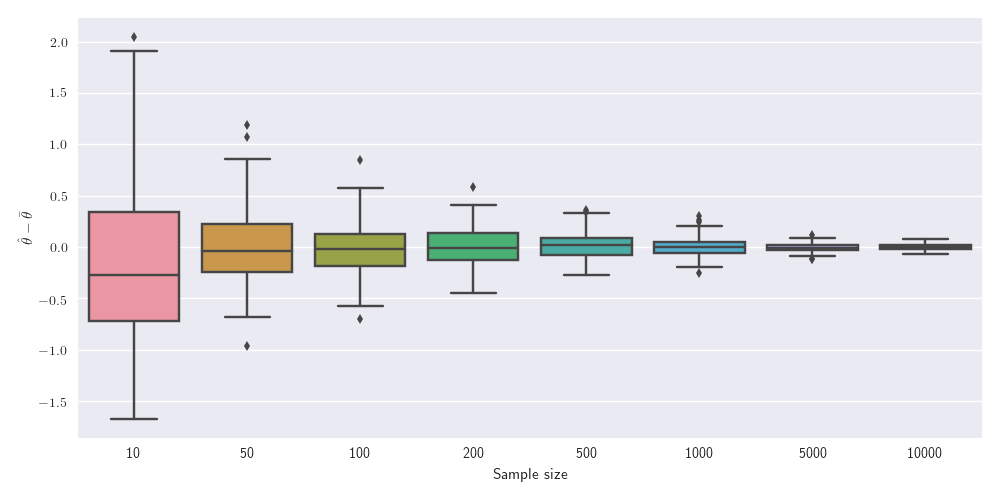}
    \caption{Boxplots of $\hat \theta - \bar \theta$ for different sample sizes ({\bf Laplace-Gaussian} model, 200 repetitions).}
    \label{img:boxplot_lap-gauss}
  \end{figure}

  \begin{figure}[!!h]
    \center
    \includegraphics[width=0.8\textwidth]{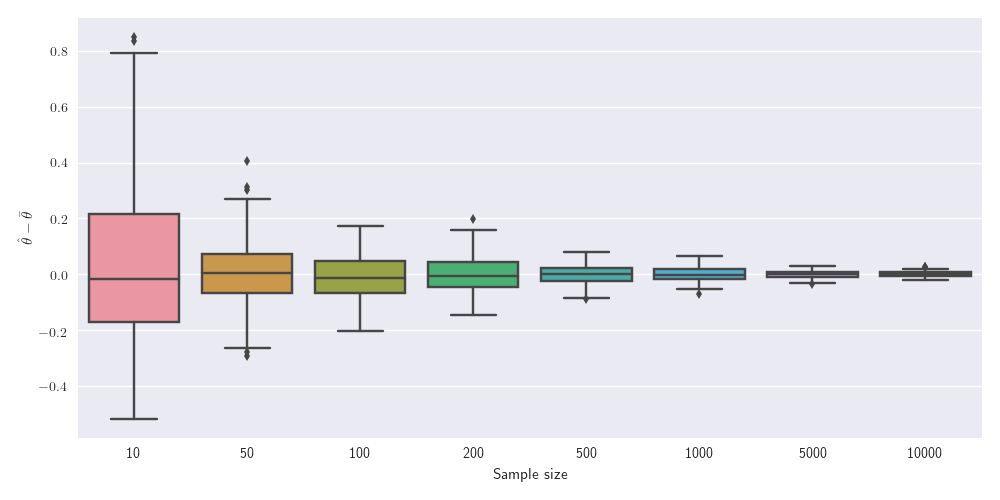}
    \caption{Boxplots of $\hat \theta - \bar \theta$ for different sample sizes ({\bf Claw-Gaussian} model, 200 repetitions).}
    \label{img:boxplot_claw-gauss}
  \end{figure}

  \begin{figure}[!!h]
    \center
    \includegraphics[width=0.8\textwidth]{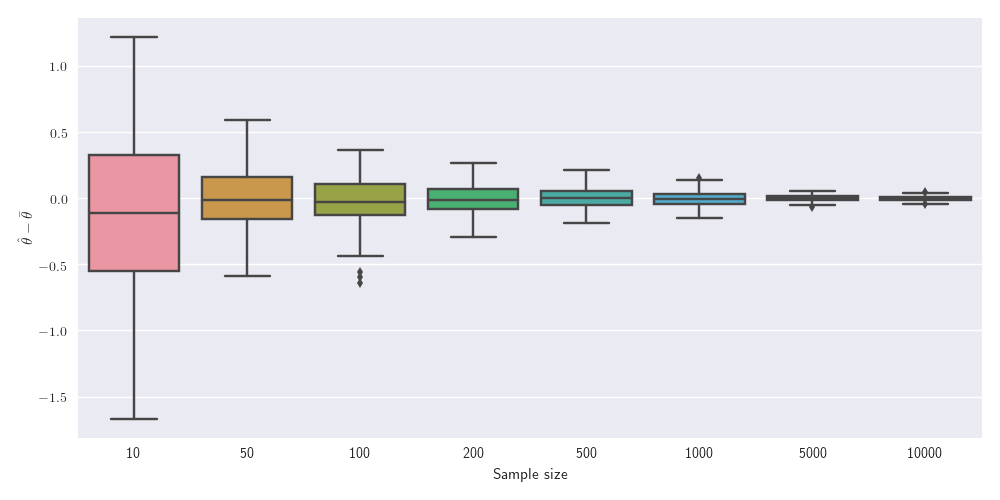}
    \caption{Boxplots of $\hat \theta - \bar \theta$ for different sample sizes ({\bf Exponential-Uniform} model, 200 repetitions).}
    \label{img:boxplot_exp-unif}
  \end{figure}

\begin{figure}[!!h]
  \center
  \includegraphics[width=0.45\textwidth]{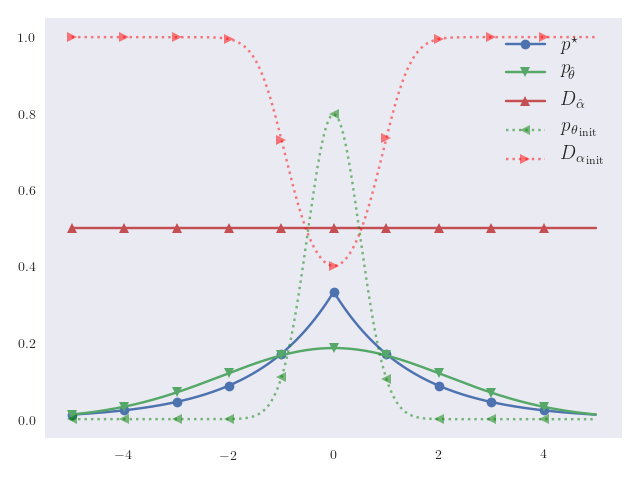}
  \includegraphics[width=0.45\textwidth]{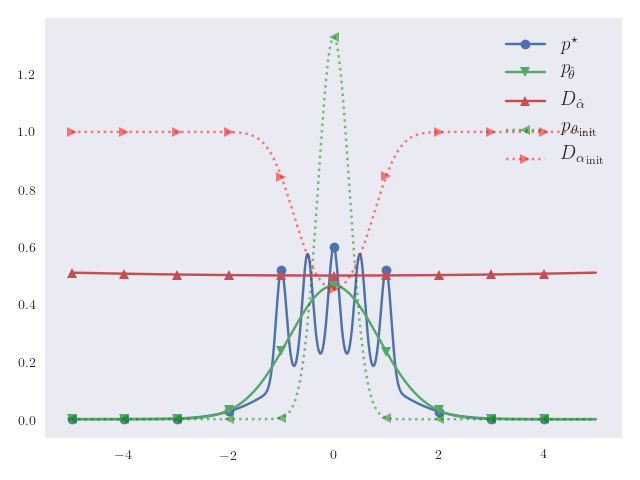}
  \includegraphics[width=0.45\textwidth]{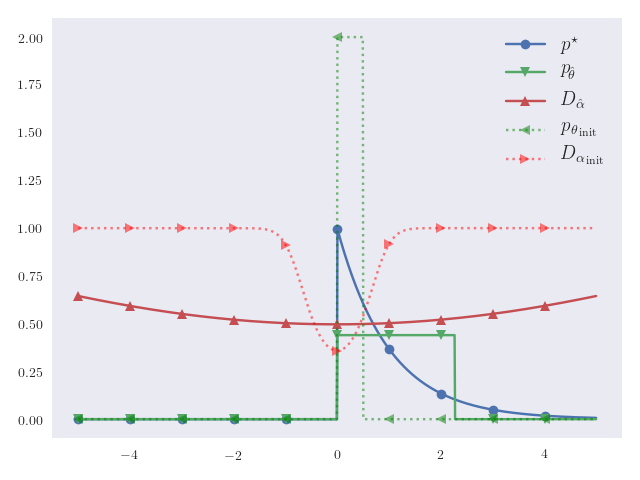}
  \caption{True density $p^\star$, estimated density $p_{\hat \theta}$, and discriminator $D_{\hat \alpha}$ for $n=10\,000$ (from left to right: {\bf Laplace-Gaussian}, {\bf Claw-Gaussian}, and {\bf Exponential-Uniform} model). Also shown are the initial density $p_{\theta_{\text{\,init}}}$ and the initial discriminator $D_{\alpha_{\text{\,init}}}$ fed into the optimization algorithm.}
  \label{img:gen_disc}
\end{figure}

In line with the above, our next step is to state a central limit theorem for $\hat \theta$. Although simple to understand, this result requires additional assumptions and some technical prerequisites.
One first needs to ensure that the function $(\theta,\alpha)\mapsto L(\theta,\alpha)$ is regular enough in a neighborhood of $(\bar \theta,\bar \alpha)$. This is captured by the following set of assumptions, which require in particular the unicity of the maximizer of the function $\alpha \mapsto L(\theta,\alpha)$ for a $\theta$ around $\bar \theta$. For a function $F:\Theta \to \mathds R$ (respectively, $G:\Theta \times \Lambda \to \mathds R$), we let $HF(\theta)$ (respectively, $H_1G(\theta,\alpha)$ and $H_2G(\theta,\alpha)$) be the Hessian matrix of the function $\theta \mapsto F(\theta)$ (respectively, $\theta \mapsto G(\theta,\alpha)$ and $\alpha \mapsto G(\theta,\alpha)$) computed at $\theta$ (respectively, at $\theta$ and $\alpha$).

{\bf Assumptions} $(H_{\text{loc}})$
\begin{enumerate}
\item[$(H_U)$] There exists a neighborhood $U$ of $\bar \theta$ and a function $\alpha: U \to \Lambda$ such that
$$\stackrel[\alpha \in \Lambda]{}{\arg\max} L(\theta,\alpha)=\{\alpha(\theta)\}, \quad \forall \theta \in U.$$
\item[$(H_V)$] The Hessian matrix $HV(\bar \theta)$ is invertible, where $V(\theta)\stackrel{\mbox{\tiny{def}}}{=}L(\theta,\alpha(\theta))$.
\item[$(H_H)$] The Hessian matrix $H_2L(\bar \theta,\bar \alpha)$ is invertible.
\end{enumerate}

We stress that under Assumption $(H_U)$, there is for each $\theta \in U$ a unique $\alpha(\theta) \in \Lambda$ such that $L(\theta,\alpha(\theta))=\sup_{\alpha \in \Lambda}L(\theta,\alpha)$. We also note that $\alpha(\bar \theta)=\bar \alpha$ under $(H_1)$. We still need some notation before we state the central limit theorem. For a function $f(\theta,\alpha)$, $\nabla_1f(\theta,\alpha)$ (respectively, $\nabla_2f(\theta,\alpha)$) means the gradient of the function $\theta \mapsto f(\theta,\alpha)$ (respectively, the function $\alpha \mapsto f(\theta,\alpha)$) computed at $\theta$ (respectively, at $\alpha)$. For a function $g(t)$, $J(g)_{t}$ is the Jacobian matrix of $g$ computed at $t$. Observe that by the envelope theorem,
$$HV(\bar \theta)=H_1L(\bar \theta,\bar \alpha)+J(\nabla_1L(\bar \theta, \cdot))_{\bar \alpha}J(\alpha)_{\bar \theta},$$
where, by the chain rule,
\begin{equation*}
J(\alpha)_{\bar \theta}=-H_2L(\bar \theta,\bar \alpha)^{-1}J(\nabla_2L(\cdot, \bar \alpha))_{\bar \theta}.
\end{equation*}
Therefore, in Assumption$(H_V)$, the Hessian matrix $HV(\bar \theta)$ can be computed with the sole knowledge of $L$. Finally, we let
$$\ell_1(\theta,\alpha)=\ln D_{\alpha}(X_1)+\ln(1-D_{\alpha}\circ G_{\theta}(Z_1)),$$
and denote by $\stackrel{\mathscr L}{\to}$ the convergence in distribution.
\begin{theo}
\label{clt}
Under Assumptions $(H'_{\emph{reg}})$, $(H_1)$, and $(H_{\emph{loc}})$, one has
$$\sqrt n (\hat \theta-\bar \theta)\stackrel{\mathscr L}{\to} Z,$$
where $Z$ is a Gaussian random variable with mean $0$ and variance
$${\mathbf V}=\emph{Var} \big[-HV(\bar \theta)^{-1}\nabla_1 \ell_1(\bar \theta,\bar \alpha)+HV(\bar \theta)^{-1}J(\nabla_1L(\bar \theta,\cdot))_{\bar \alpha}H_2L(\bar \theta,\bar \alpha)^{-1}\nabla_2\ell_1(\bar \theta,\bar \alpha)\big].$$
\end{theo}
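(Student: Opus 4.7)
The plan is the standard envelope-theorem-plus-M-estimator argument adapted to the minimax setting: reduce $\inf_\theta \sup_\alpha \hat L(\theta,\alpha)$ to a first-order condition for a single function of $\theta$, Taylor-expand it around $\bar\theta$, and apply the multivariate CLT to the resulting iid score sums.

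First, I would localize using the implicit function theorem. Under Lemma \ref{differentials}, $H_2\hat L(\theta,\alpha)/n \to H_2 L(\theta,\alpha)$ uniformly almost surely; combined with Assumption $(H_H)$ (invertibility of $H_2L(\bar\theta,\bar\alpha)$) and $(H_U)$ (local uniqueness), this produces, almost surely for $n$ large enough, a $C^1$ map $\hat\alpha_n$ defined on a deterministic neighborhood $U$ of $\bar\theta$ satisfying $\nabla_2\hat L(\theta,\hat\alpha_n(\theta))=0$ and pointwise maximizing $\alpha\mapsto\hat L(\theta,\alpha)$, with $\hat\alpha_n(\bar\theta)\to\bar\alpha$. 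By Theorem \ref{consistency}, $\hat\theta\to\bar\theta$ and $\hat\alpha\to\bar\alpha$, so eventually $\hat\theta\in U$, $\hat\alpha=\hat\alpha_n(\hat\theta)$, and the interior condition in $(H_1)$ gives $\nabla\hat V_n(\hat\theta)=0$, where $\hat V_n(\theta):=\hat L(\theta,\hat\alpha_n(\theta))/n$; the envelope theorem reduces this to $\nabla_1\hat L(\hat\theta,\hat\alpha)/n=0$. Similarly, $\nabla V(\bar\theta)=\nabla_1 L(\bar\theta,\bar\alpha)=0$.

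Next, Taylor-expand $\theta\mapsto\nabla\hat V_n(\theta)$ around $\bar\theta$ at $\hat\theta$ to obtain $0=\nabla\hat V_n(\bar\theta)+H\hat V_n(\tilde\theta)(\hat\theta-\bar\theta)$ for some $\tilde\theta$ between them. Lemma \ref{differentials} and $(H_V)$ yield $H\hat V_n(\tilde\theta)\to HV(\bar\theta)$ almost surely with invertible limit, so
\[
\sqrt n(\hat\theta-\bar\theta)=-HV(\bar\theta)^{-1}\sqrt n\,\nabla\hat V_n(\bar\theta)+o_P(1).
\]
To unpack $\sqrt n\,\nabla\hat V_n(\bar\theta)=\sqrt n\,\nabla_1\hat L(\bar\theta,\hat\alpha_n(\bar\theta))/n$, expand in $\alpha$ around $\bar\alpha$, and separately Taylor-expand $0=\nabla_2\hat L(\bar\theta,\hat\alpha_n(\bar\theta))/n$ (using $\nabla_2L(\bar\theta,\bar\alpha)=0$ and $(H_H)$) to get
\[
\sqrt n(\hat\alpha_n(\bar\theta)-\bar\alpha)=-H_2 L(\bar\theta,\bar\alpha)^{-1}\,\frac{1}{\sqrt n}\sum_{i=1}^n\nabla_2\ell_i(\bar\theta,\bar\alpha)+o_P(1).
\]
Substituting back, with Lemma \ref{differentials} providing $J(\nabla_1\hat L(\bar\theta,\cdot))_{\bar\alpha}/n\to J(\nabla_1L(\bar\theta,\cdot))_{\bar\alpha}$, gives
\[
\sqrt n\,\nabla\hat V_n(\bar\theta)=\frac{1}{\sqrt n}\sum_{i=1}^n\Bigl[\nabla_1\ell_i(\bar\theta,\bar\alpha)-J(\nabla_1L(\bar\theta,\cdot))_{\bar\alpha}H_2L(\bar\theta,\bar\alpha)^{-1}\nabla_2\ell_i(\bar\theta,\bar\alpha)\Bigr]+o_P(1).
\]

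Finally, the envelope identities $\nabla_1L(\bar\theta,\bar\alpha)=\nabla_2L(\bar\theta,\bar\alpha)=0$ imply that the summands above are iid, centered, and square-integrable under $(H'_D)$ and $(H'_G)$, so the multivariate CLT combined with Slutsky's lemma produces $\sqrt n(\hat\theta-\bar\theta)\stackrel{\mathscr L}{\to} Z$ with $Z$ a centered Gaussian whose variance is exactly the stated $\mathbf V$. The main obstacle is the localization in the first step: guaranteeing almost surely, for $n$ large, the existence of a smooth $\hat\alpha_n(\cdot)$ on a \emph{deterministic} neighborhood of $\bar\theta$ and the identification $\hat\alpha=\hat\alpha_n(\hat\theta)$. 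This requires carefully combining the consistency of Theorem \ref{consistency}, the uniform second-order convergence of Lemma \ref{differentials}, the quantitative invertibility of $(H_H)$, and the local uniqueness in $(H_U)$ through a uniform implicit function argument.
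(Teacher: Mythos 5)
Your proposal is correct and follows essentially the same route as the paper's proof: localization of the inner maximizer $\hat\alpha_n(\cdot)$ near $\bar\theta$ via $(H_U)$, $(H_H)$, consistency and the uniform convergence of derivatives, then an envelope-theorem Taylor expansion of $\nabla\hat V$ at $\bar\theta$ combined with a separate linearization of $\nabla_2\hat L(\bar\theta,\hat\alpha_n(\bar\theta))=0$, and finally the CLT applied to the resulting centered iid scores, yielding exactly the stated variance. The only step you compress is the convergence $H\hat V_n(\tilde\theta)\to HV(\bar\theta)$, which in the paper is not a direct consequence of Lemma \ref{differentials} alone but is handled by a dedicated technical lemma (convergence of $\hat I_3$, using the smoothness and Jacobian formula for $\alpha(\cdot)$); this matches the obstacle you yourself flag at the end.
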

We note that the expression of the variance is relatively complex and, unfortunately, that it cannot be simplified, even for a dimension of the parameter equal to $1$. Nevertheless, the take-home message is that the estimator $\hat \theta$ is asymptotically normal, with a convergence rate of $\sqrt n$. This is illustrated in Figures \ref{img:hist_lap-gauss}, \ref{img:hist_claw-gauss}, and \ref{img:hist_exp-unif}, which respectively show the histograms and kernel estimates of the distribution of $\sqrt n (\hat \theta - \bar \theta)$ for the {\bf Laplace-Gaussian}, the {\bf Claw-Gaussian}, and the {\bf Exponential-Uniform} model in function of the sample size $n$ (200 repetitions).
\begin{figure}[!!h]
    \center
    \includegraphics[width=0.8\textwidth]{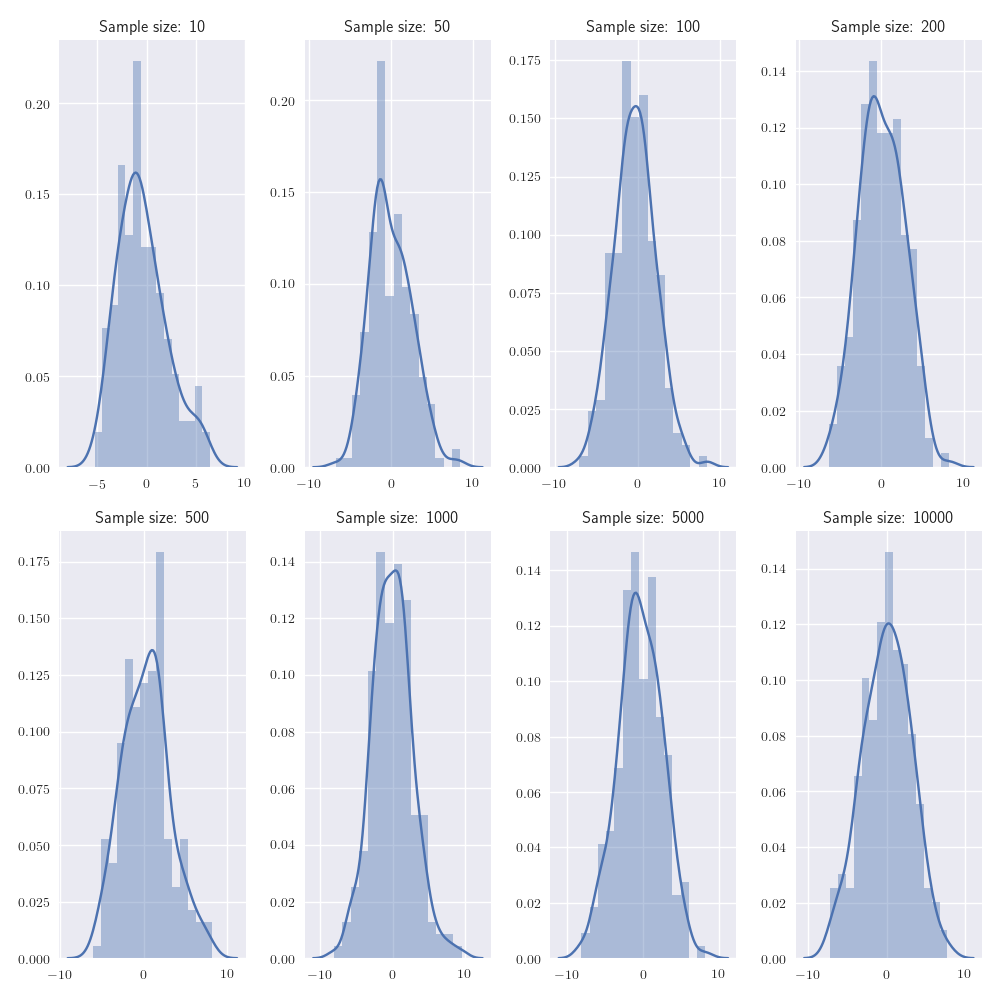}
    \caption{Histograms and kernel estimates (continuous line) of the distribution of $\sqrt n (\hat \theta - \bar \theta)$ for different sample sizes $n$ ({\bf Laplace-Gaussian} model, 200 repetitions).}
    \label{img:hist_lap-gauss}
  \end{figure}

  \begin{figure}[!!h]
    \center
    \includegraphics[width=0.8\textwidth]{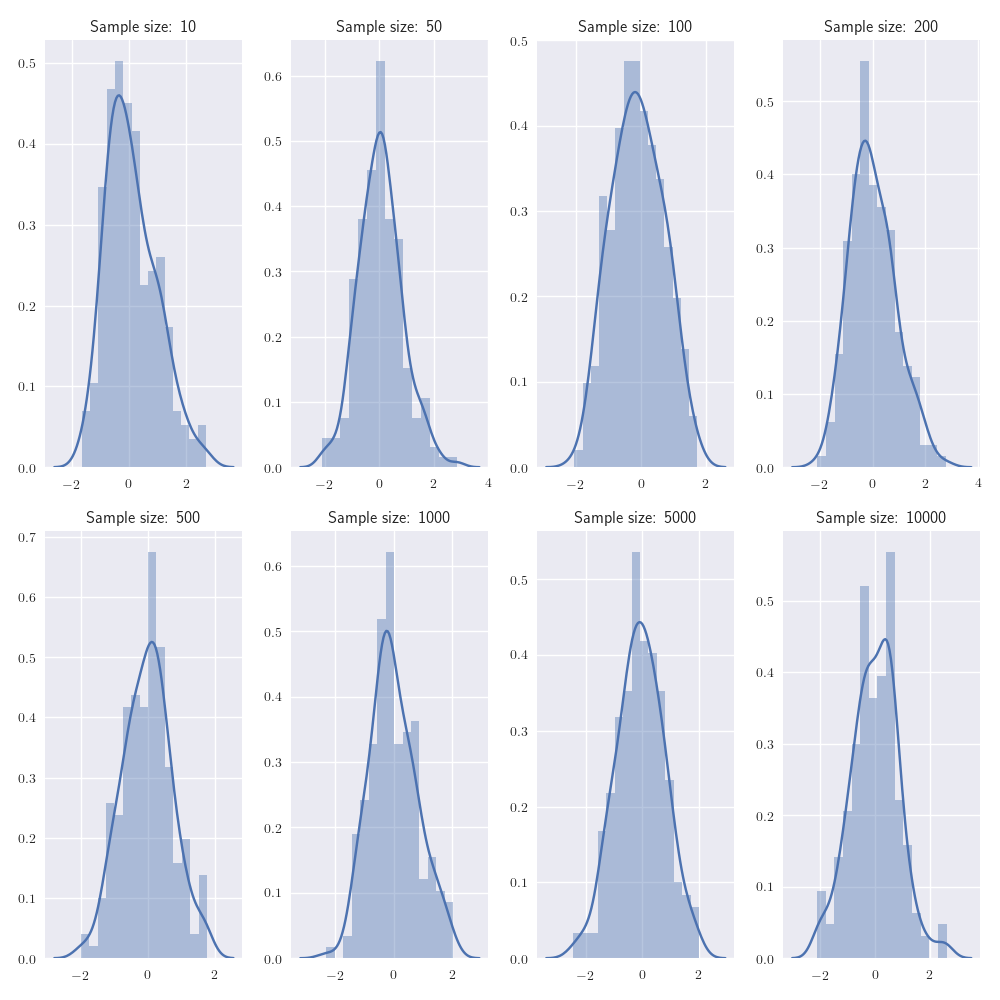}
    \caption{Histograms and kernel estimates (continuous line) of the distribution of $\sqrt n (\hat \theta - \bar \theta)$ for different sample sizes $n$ ({\bf Claw-Gaussian} model, 200 repetitions).}
    \label{img:hist_claw-gauss}
  \end{figure}

  \begin{figure}[!!h]
    \center
    \includegraphics[width=0.8\textwidth]{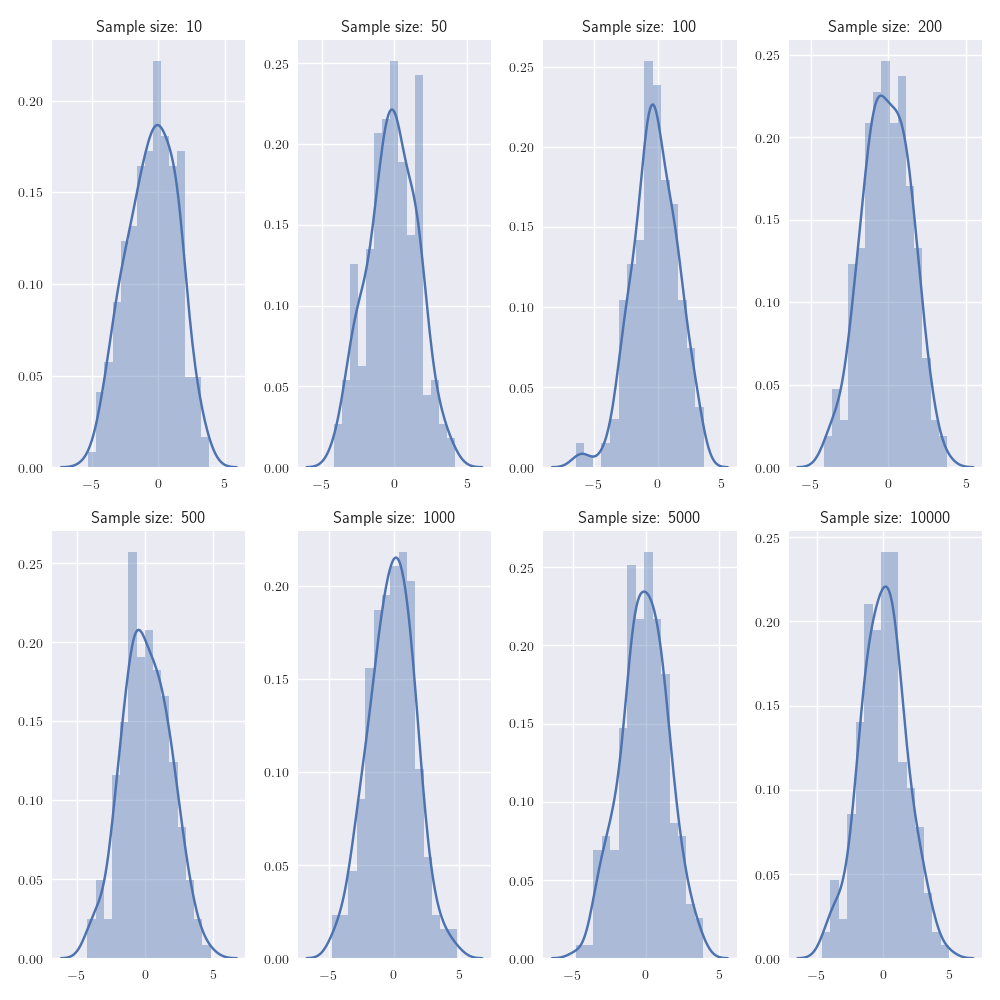}
    \caption{Histograms and kernel estimates (continuous line) of the distribution of $\sqrt n (\hat \theta - \bar \theta)$ for different sample sizes $n$ ({\bf Exponential-Uniform} model, 200 repetitions).}
    \label{img:hist_exp-unif}
  \end{figure}

\begin{proof}
By technical Lemma \ref{openset}, we can find under Assumptions $(H'_{\text{reg}})$ and $(H_1)$ an open set $V\subset U \subset \Theta^{\circ}$ containing $\bar \theta$ such that, for all $\theta \in V$, $\alpha(\theta) \in  \Lambda^{\circ}$. In the sequel, to lighten the notation, we assume without loss of generality that $V=U$. Thus, for all $\theta \in U$, we have $\alpha(\theta) \in \Lambda^{\circ}$ and $L(\theta,\alpha(\theta))=\sup_{\alpha \in \Lambda} L(\theta,\alpha)$ (with $\alpha(\bar \theta)=\bar \alpha$ by $(H_1)$).  Accordingly, $\nabla_2L(\theta,\alpha(\theta))=0$, $\forall \theta \in U$. Also, since $H_2L(\bar \theta,\bar \alpha)$ is invertible by $(H_H)$ and since the function $(\theta,\alpha)\mapsto H_2L(\theta,\alpha)$ is continuous, there exists an open set $U' \subset U$ such that $H_2L(\theta,\alpha)$ is invertible as soon as $(\theta,\alpha) \in (U',\alpha(U'))$. Without loss of generality, we assume that $U'=U$. Thus, by the chain rule, the function $\alpha$ is of class $C^2$ in a neighborhood $U'\subset U$ of $\bar \theta$, say $U'=U$, with Jacobian matrix given by
$$J(\alpha)_{\theta}=-H_2L(\theta,\alpha(\theta))^{-1}J\big(\nabla_2L(\cdot,\alpha(\theta))\big)_{\theta}, \quad \forall \theta \in U.$$
We note that $H_2L(\theta,\alpha(\theta))^{-1}$ is of format $q\times q$ and $J(\nabla_2L(\cdot,\alpha(\theta)))_{\theta}$ of format $q\times p$.

Now, for each $\theta \in U$, we let $\hat \alpha(\theta)$ be such that $\hat L(\theta,\hat \alpha(\theta))=\sup_{\alpha \in \Lambda} \hat L(\theta,\alpha)$. Clearly,
\begin{align*}
|L(\theta,\hat \alpha(\theta))-L(\theta,\alpha(\theta))|  & \leq |L(\theta,\hat \alpha(\theta))-\hat L(\theta,\hat\alpha(\theta))| +|\hat L(\theta,\hat \alpha(\theta))-L(\theta,\alpha(\theta))|\\
& \leq \sup_{\alpha \in \Lambda} |L(\theta,\alpha)-\hat L(\theta,\alpha)|+|\sup_{\alpha \in \Lambda}\hat L(\theta,\alpha)-\sup_{\alpha \in \Lambda}L(\theta,\alpha)|\\
& \leq 2 \sup_{\alpha \in \Lambda} |\hat L(\theta,\alpha)-L(\theta,\alpha)|.
\end{align*}
Therefore, by Lemma \ref{differentials}, $\sup_{\theta \in U}|L(\theta,\hat\alpha(\theta))-L(\theta,\alpha(\theta))|\to 0$ almost surely. The event on which this convergence holds does not depend upon $\theta \in U$, and, arguing as in the proof of Theorem \ref{consistency}, we deduce that under $(H_1)$, $\mathds P(\hat \alpha(\theta)\to \alpha(\theta)\,\forall \theta \in U)=1$. Since $\alpha(\theta) \in \Lambda^{\circ}$ for all $\theta \in U$, we also have $\mathds P(\hat \alpha(\theta)\in \Lambda^{\circ}\, \forall \theta \in U) \to 1$ as $n \to \infty$. Thus, in the sequel, it will be assumed without loss of generality that, for all $\theta \in U$, $\hat \alpha(\theta) \in \Lambda^{\circ}$.

Still by Lemma \ref{differentials}, $\sup_{\theta \in \Theta, \alpha \in \Lambda}\|H_2\hat L(\theta,\alpha)- H_2L(\theta,\alpha)\|\to 0$ almost surely. Since $H_2L(\theta,\alpha)$ is invertible on $U\times \alpha(U)$, we have
$$\mathds P\big(H_2\hat L(\theta,\alpha)\mbox{ invertible }\forall (\theta,\alpha) \in U\times \alpha(U)\big)\to 1.$$
Thus, we may and will assume that $H_2\hat L(\theta,\alpha)$ is invertible for all $(\theta,\alpha)\in U\times \alpha(U)$.

Next, since $\hat \alpha(\theta)  \in \Lambda^{\circ}$ for all $\theta \in U$, one has $\nabla_2\hat L(\theta,\hat \alpha(\theta))=0$. Therefore, by the chain rule, $\hat \alpha$ is of class $C^2$ on $U$, with Jacobian matrix
$$J(\hat \alpha)_{\theta}=-H_2\hat L(\theta,\hat \alpha(\theta))^{-1}J\big(\nabla_2\hat L(\cdot,\hat \alpha(\theta))\big)_{\theta}, \quad \forall \theta \in U.$$

Let $\hat V(\theta)\stackrel{\mbox{\tiny{def}}}{=}\hat L(\theta,\hat \alpha(\theta))=\sup_{\alpha \in \Lambda} \hat L(\theta,\alpha)$. By the envelope theorem, $\hat V$ is of class $C^2$, $\nabla \hat V(\theta)=\nabla_1 \hat L(\theta,\hat \alpha(\theta))$, and $H\hat V(\theta)=H_1\hat L(\theta,\hat \alpha(\theta))+J(\nabla_1\hat L(\theta,\cdot))_{\hat \alpha(\theta)}J(\hat \alpha)_{\theta}$. Recall that $\hat \theta \to \bar \theta$ almost surely by Theorem \ref{consistency}, so that we may assume that $\hat \theta \in \Theta^{\circ}$ by $(H_1)$. Moreover, we can also assume that $\hat \theta+t(\hat \theta-\bar \theta) \in U$, $\forall t \in [0,1]$. Thus, by a Taylor series expansion with integral remainder, we have
\begin{equation}
\label{TE1}
0=\nabla \hat V(\hat \theta)=\nabla \hat V(\bar \theta)+\int_0^1H\hat V(\hat \theta+t(\hat \theta-\bar \theta)){\rm d}t(
\hat \theta-\bar \theta).
\end{equation}
Since $\hat \alpha(\bar \theta) \in \Lambda^{\circ}$ and $\hat L(\bar \theta,\hat \alpha(\bar \theta))=\sup_{\alpha \in \Lambda} \hat L(\bar \theta,\alpha)$, one has $\nabla_2 \hat L(\bar \theta,\hat \alpha(\bar \theta))=0$. Thus,
\begin{align*}
0&=\nabla_2\hat L(\bar \theta,\hat \alpha(\bar \theta))\\
&=\nabla_2 \hat L(\bar \theta,\alpha(\bar \theta))+\int_0^1 H_2 \hat L\big(\bar \theta,\alpha(\bar \theta)+t(\hat \alpha(\bar \theta)-\alpha(\bar \theta))\big){\rm d}t (\hat \alpha(\bar \theta)-\alpha(\bar \theta)).
\end{align*}
By Lemma \ref{differentials}, since $\hat \alpha(\bar \theta) \to \alpha(\bar \theta)$ almost surely, we have
$$\hat I_1\stackrel{\mbox{\tiny{def}}}{=}\int_0^1 H_2\hat L\big(\bar \theta,\alpha(\bar \theta)+t(\hat \alpha(\bar \theta)-\alpha(\bar \theta))\big){\rm d}t\to H_2 L(\bar \theta,\bar \alpha)\quad \mbox{almost surely}.$$
Because $H_2L(\bar \theta,\bar \alpha)$ is invertible, $\mathds P(\hat I_1\mbox{ invertible})\to 1$ as $n \to \infty$. Therefore, we may assume, without loss of generality, that $\hat I_1$ is invertible. Hence,
\begin{equation}
\label{TE2}
\hat \alpha(\bar \theta)-\alpha(\bar \theta)=-\hat I_1^{-1}\nabla_2\hat L(\bar \theta,\alpha(\bar \theta)).
\end{equation}
Furthermore,
$$\nabla \hat V(\bar \theta)=\nabla_1\hat L (\bar \theta,\hat \alpha(\bar \theta))=\nabla_1\hat L(\bar \theta,\alpha(\bar \theta))+\hat I_2(\hat \alpha(\bar \theta)-\alpha(\bar \theta)),$$
where
$$\hat I_2\stackrel{\mbox{\tiny{def}}}{=}\int_0^1{J(\nabla_1 \hat L(\bar \theta,\cdot))}_{\alpha(\bar \theta)+t(\hat \alpha(\bar \theta)-\alpha(\bar \theta))}{\rm d}t.$$
By Lemma \ref{differentials}, $\hat I_2 \to J(\nabla_1 L(\bar \theta,\cdot))_{\alpha(\bar \theta)}$ almost surely. Combining (\ref{TE1}) and (\ref{TE2}), we obtain
$$0=\nabla_1 \hat L(\bar \theta,\alpha(\bar \theta))-\hat I_2\hat I_1^{-1}\nabla_2 \hat L (\bar \theta,\alpha(\bar \theta))+\hat I_3(\hat \theta-\bar \theta),$$
where
$$\hat I_3\stackrel{\mbox{\tiny{def}}}{=}\int_0^1 H\hat V(\hat \theta+t(\hat \theta-\bar \theta)){\rm d}t.$$
By technical Lemma \ref{lem-tech1}, we have $\hat I_3 \to HV(\bar \theta)$ almost surely. So, by $(H_V)$, it can be assumed that $\hat I_3$ is invertible. Consequently,
$$\hat \theta-\bar \theta=-\hat I_3^{-1}\nabla_1 \hat L(\bar \theta,\alpha(\bar \theta))+\hat I_3^{-1}\hat I_2\hat I_1^{-1}\nabla_2 \hat L(\bar \theta,\alpha(\bar \theta)),$$
or, equivalently, since $\alpha(\bar \theta)=\bar \alpha$,
$$\hat \theta-\bar \theta=-\hat I_3^{-1}\nabla_1 \hat L(\bar \theta,\bar \alpha)+\hat I_3^{-1}\hat I_2\hat I_1^{-1}\nabla_2 \hat L(\bar \theta,\bar \alpha).$$
Using Lemma \ref{differentials}, we conclude that $\sqrt n(\hat \theta-\bar \theta)$ has the same limit distribution as
$$S_n\stackrel{\mbox{\tiny{def}}}{=}-\sqrt n HV(\bar \theta)^{-1}\nabla_1 \hat L(\bar \theta,\bar \alpha)+\sqrt nHV(\bar \theta)^{-1}J(\nabla_1L(\bar \theta,\cdot))_{\bar \alpha}H_2L(\bar \theta,\bar \alpha)^{-1}\nabla_2\hat L(\bar \theta,\bar \alpha).$$

Let
$$\ell_i(\theta,\alpha)=\ln D_{\alpha}(X_i)+\ln(1-D_{\alpha}\circ G_{\theta}(Z_i)), \quad 1 \leq i \leq n.$$
With this notation, we have
$$S_n=\frac{1}{\sqrt n} \sum_{i=1}^n \Big(-HV(\bar \theta)^{-1}\nabla_1 \ell_i(\bar \theta,\bar \alpha)+HV(\bar \theta)^{-1}J(\nabla_1L(\bar \theta,\cdot))_{\bar \alpha}H_2L(\bar \theta,\bar \alpha)^{-1}\nabla_2\ell_i(\bar \theta,\bar \alpha)\Big).$$
One has $\nabla V(\bar \theta)=0$, since $V(\bar \theta)=\inf_{\theta \in \Theta} V(\theta)$ and $\bar \theta \in \Theta^{\circ}$. Therefore, under $(H'_{\text{reg}})$, $\mathds E\nabla_1 \ell_i(\bar \theta,\bar \alpha)=\nabla_1\mathds E\ell_i(\bar \theta,\bar \alpha)=\nabla_1 L(\bar \theta,\bar \alpha)=\nabla V(\bar \theta)=0$.
Similarly, $\mathds E\nabla_2 \ell_i(\bar \theta,\bar \alpha)=\nabla_2\mathds E\ell_i(\bar \theta,\bar \alpha)=\nabla_2 L(\bar \theta,\bar \alpha)=0$, since $L(\bar \theta,\bar \alpha)=\sup_{\alpha \in \Lambda} L(\bar \theta,\alpha)$ and $\bar \alpha \in \Lambda^{\circ}$.
Using the central limit theorem, we conclude that
$$\sqrt n (\hat \theta-\bar \theta)\stackrel{\mathscr L}{\to} Z,$$
where $Z$ is a Gaussian random variable with mean $0$ and variance
$${\mathbf V}=\mbox{Var} \big[-HV(\bar \theta)^{-1}\nabla_1 \ell_1(\bar \theta,\bar \alpha)+HV(\bar \theta)^{-1}J(\nabla_1L(\bar \theta,\cdot))_{\bar \alpha}H_2L(\bar \theta,\bar \alpha)^{-1}\nabla_2\ell_1(\bar \theta,\bar \alpha)\big].$$
\end{proof}
\section{Technical results}
\label{STL}
\subsection{Proof of Theorem \ref{theorem-encadrement}}
Choose $\varepsilon \in (0,\underline t)$ and $D\in \mathscr D$, a $\bar \theta$-admissible discriminator, such that $\|D-D_{\bar \theta}^{\star}\|_{\infty}\leq \varepsilon$. Observe that
\begin{align}
L(\bar \theta,D)&=\int \ln(D)p^{\star}{\rm d}\mu+\int \ln(1-D)p_{\bar \theta}{\rm d}\mu\nonumber\\
&=\int \ln \Big(\frac{D}{D_{\bar \theta}^{\star}}\Big)p^{\star}{\rm d}\mu+\ln \Big(\frac{1-D}{1-D_{\bar \theta}^{\star}}\Big)p_{\bar \theta}{\rm d}\mu+2\JS{p^{\star}}{p_{\bar \theta}}-\ln 4.\label{zero}
\end{align}
Clearly,
\begin{align*}
\int \ln \Big(\frac{D}{D^{\star}_{\bar \theta}}\Big)p^{\star}{\rm d}\mu&=\int \ln \Big(1+\Big[\frac{D}{D^{\star}_{\bar \theta}}-1\Big]\Big)p^{\star}{\rm d}\mu\\
&=\int \ln \Big(1+\frac{\gamma_{\bar \theta}}{D^{\star}_{\bar \theta}}\Big)p^{\star}{\rm d}\mu,
\end{align*}
where $\gamma_{\bar \theta}=D-D_{\bar \theta}^{\star}$. By a Taylor series expansion with remainder, we may write
$$\ln \Big(1+\frac{\gamma_{\bar \theta}}{D_{\bar \theta}^{\star}}\Big)=\frac{\gamma_{\bar \theta}}{D^{\star}_{\bar \theta}}-\frac{1}{2}\Big(\frac{\gamma_{\bar \theta}}{D_{\bar \theta}^{\star}}\Big)^2+\frac{1}{3}\int_0^{\gamma_{\bar \theta}/D^{\star}_{\bar \theta}}\frac{1}{(1+u)^3}\Big(\frac{\gamma_{\bar \theta}}{D^{\star}_{\bar \theta}}-u\Big)^2{\rm d}u.$$
Whenever $\gamma_{\bar \theta}\leq 0$ (worst case), we have
$$\int_0^{\gamma_{\bar \theta}/D^{\star}_{\bar \theta}}\frac{1}{(1+u)^3}\Big(\frac{\gamma_{\bar \theta}}{D^{\star}_{\bar \theta}}-u\Big)^2{\rm d}u=-\int_{\gamma_{\bar \theta}/D^{\star}_{\bar \theta}}^0\frac{1}{(1+u)^3}\Big(\frac{\gamma_{\bar \theta}}{D^{\star}_{\bar \theta}}-u\Big)^2{\rm d}u.$$
Observe that, for $\gamma_{\bar \theta}/D^{\star}_{\bar \theta}\leq u \leq 0$, since $\|\gamma_{\bar \theta}\|_{\infty} \leq \varepsilon$ by assumption and $D^{\star}_{\bar \theta}\geq \underline t$ by $(H_0)$,
$$1+u\geq 1+\frac{\gamma_{\bar \theta}}{D^{\star}_{\bar \theta}}\geq 1-\frac{\varepsilon}{D^{\star}_{\bar \theta}}\geq 1-\frac{\varepsilon}{\underline t}>0.$$
Thus,
\begin{equation}
\label{one}
\int \ln \Big(\frac{D}{D^{\star}_{\bar \theta}}\Big)p^{\star}{\rm d}\mu\geq \int \Big (\frac{\gamma_{\bar \theta}}{D^{\star}_{\bar \theta}}-\frac{1}{2}\Big(\frac{\gamma_{\bar \theta}}{D^{\star}_{\bar \theta}}\Big)^2-\frac{1}{9}\Big(\frac{|\gamma_{\bar \theta}|}{D^{\star}_{\bar \theta}}\Big)^3 \frac{1}{(1-\varepsilon/{\underline t})^3}\Big)p^{\star}{\rm d}\mu.
\end{equation}
Similarly, we have
\begin{align*}
\int \ln \Big(\frac{1-D}{1-D^{\star}_{\bar \theta}}\Big)p_{\bar \theta}{\rm d}\mu&=\int \ln \Big(1+\Big[\frac{1-D}{1-D^{\star}_{\bar \theta}}-1\Big]\Big)p_{\bar \theta}{\rm d}\mu\\
&=\int \ln \Big(1-\frac{\gamma_{\bar \theta}}{1-D^{\star}_{\bar \theta}}\Big)p_{\bar \theta}{\rm d}\mu.
\end{align*}
By a Taylor series with remainder,
$$\ln \Big(1-\frac{\gamma_{\bar \theta}}{1-D^{\star}_{\bar \theta}}\Big)=-\frac{\gamma_{\bar \theta}}{1-D^{\star}_{\bar \theta}}-\frac{1}{2}\Big(\frac{\gamma_{\bar \theta}}{1-D^{\star}_{\bar \theta}}\Big)^2+\frac{1}{3}\int_0^{-\gamma_{\bar \theta}/(1-D^{\star}_{\bar \theta})}\frac{1}{(1+u)^3}\Big(\frac{\gamma_{\bar \theta}}{1-D^{\star}_{\bar \theta}}+u\Big)^2{\rm d}u.$$
Whenever $\gamma_{\bar \theta}\geq 0$ (worst case), we have
$$\int_0^{-\gamma_{\bar \theta}/(1-D^{\star}_{\bar \theta})}\frac{1}{(1+u)^3}\Big(\frac{\gamma_{\bar \theta}}{1-D^{\star}_{\bar \theta}}+u\Big)^2{\rm d}u=-\int_{-\gamma_{\bar \theta}/(1-D^{\star}_{\bar \theta})}^0\frac{1}{(1+u)^3}\Big(\frac{\gamma_{\bar \theta}}{1-D^{\star}_{\bar \theta}}+u\Big)^2{\rm d}u.$$
But, for $-\frac{\gamma_{\bar \theta}}{1-D^{\star}_{\bar \theta}}\leq u \leq 0$,
$$1+u\geq 1-\frac{\gamma_{\bar \theta}}{1-D^{\star}_{\bar \theta}}\geq 1-\frac{\varepsilon}{1-D^{\star}_{\bar \theta}}\geq 1-\frac{\varepsilon}{\underline t}> 0.$$
Thus, we obtain
\begin{equation}
\label{two}
\int \ln \Big(\frac{1-D}{1-D^{\star}_{\bar \theta}}\Big)p_{\bar \theta}{\rm d}\mu \geq \int \Big( -\frac{\gamma_{\bar \theta}}{1-D^{\star}_{\bar \theta}}-\frac{1}{2}\Big(\frac{\gamma_{\bar \theta}}{1-D^{\star}_{\bar \theta}}\Big)^2-\frac{1}{9}\Big(\frac{|\gamma_{\bar \theta}|}{1-D^{\star}_{\bar \theta}}\Big)^3\frac{1}{(1-\varepsilon/{\underline t})^3}\Big)p_{\bar \theta}{\rm d}\mu.
\end{equation}
Letting
$$\tau=\frac{1}{(1-\varepsilon/{\underline t})^3},$$
and combining (\ref{zero}), (\ref{one}), and (\ref{two}), we are led to
\begin{align*}
L(\bar \theta,D) &\geq \int \Big (\frac{\gamma_{\bar \theta}}{D^{\star}_{\bar \theta}}-\frac{1}{2}\Big(\frac{\gamma_{\bar \theta}}{D^{\star}_{\bar \theta}}\Big)^2-\frac{1}{9}\Big(\frac{|\gamma_{\bar \theta}|}{D^{\star}_{\bar \theta}}\Big)^3 \frac{1}{\tau}\Big)p^{\star}{\rm d}\mu\\
& \quad +\int \Big( -\frac{\gamma_{\bar \theta}}{1-D^{\star}_{\bar \theta}}-\frac{1}{2}\Big(\frac{\gamma_{\bar \theta}}{1-D^{\star}_{\bar \theta}}\Big)^2-\frac{1}{9}\Big(\frac{|\gamma_{\bar \theta}|}{1-D^{\star}_{\bar \theta}}\Big)^3\frac{1}{\tau}\Big)p_{\bar \theta}{\rm d}\mu\\
& \quad +2 \JS{p^{\star}}{p_{\bar \theta}}-\ln 4 \\
& \geq -\frac{\varepsilon^2}{2}\int \frac{p^{\star}}{D_{\bar \theta}^{\star 2}}{\rm d}\mu-\frac{\varepsilon^2}{2}\int \frac{p_{\bar \theta}}{(1-D^{\star}_{\bar \theta})^2}{\rm d}\mu-\frac{\varepsilon^3}{9\tau}\int \Big(\frac{p^{\star}}{D_{\bar \theta}^{\star 3}}+\frac{p_{\bar \theta}}{(1-D^{\star}_{\bar \theta})^3}\Big){\rm d}\mu \nonumber\\
& \quad +2 \JS{p^{\star}}{p_{\bar \theta}}-\ln 4 \\
& =-\frac{\varepsilon ^2}{2} \Big ( \int \frac{(p^{\star}+p_{\bar \theta})^2}{p^{\star}}{\rm d}\mu+\int \frac{(p^{\star}+p_{\bar \theta})^2}{p_{\bar \theta}}{\rm d}\mu\Big)\\
& \qquad -\frac{\varepsilon^3}{9\tau}\int \Big(\frac{(p^{\star}+p_{\bar \theta})^3}{p^{\star 2}}+\frac{(p^{\star}+p_{\bar \theta})^3}{p_{\bar \theta}^{2}}\Big){\rm d}\mu +2 \JS{p^{\star}}{p_{\bar \theta}}-\ln 4.
\end{align*}
Using $(H_0)$, we conclude that there exists a constant $c>0$ (depending only upon $\underline t$) such that
$$L(\bar \theta,D)\geq -c\varepsilon^2-\frac{c}{\tau}\varepsilon^3+2\JS{p^{\star}}{p_{\bar \theta}}-\ln 4,$$
i.e.,
$$2\JS{p^{\star}}{p_{\bar \theta}}\leq c\varepsilon^2+\frac{c}{\tau}\varepsilon^3+L(\bar \theta,D)+\ln 4.$$
But
\begin{align*}
L(\bar \theta,D) &\leq \sup_{D \in \mathscr D}L(\bar \theta,D)\\
& \leq \sup_{D \in \mathscr D}L(\theta^{\star},D)\\
& \quad \mbox{(by definition of $\bar \theta$)}\\
&\leq \sup_{D \in \mathscr D_{\infty}}L(\theta^{\star},D)\\
&= L(\theta^{\star},D^{\star}_{\theta^{\star}})=2\JS{p^{\star}}{p_{\theta^{\star}}}-\ln 4.
\end{align*}
Thus,
$$2\JS{p^{\star}}{p_{\bar \theta}}\leq c\varepsilon^2+\frac{c}{\tau}\varepsilon^3+2\JS{p^{\star}}{p_{\theta^{\star}}}.$$
This shows the right-hand side of inequality (\ref{encadrement}). To prove the left-hand side, just note that by inequality (\ref{PA}),
$$\JS{p^{\star}}{p_{\theta^{\star}}} \leq \JS{p^{\star}}{p_{\bar \theta}}.$$
\subsection{Proof of Lemma \ref{differentials}}
To simplify the notation, we set
$$\Delta=\frac{\partial ^{a+b+c+d}}{\partial \theta_i^a \partial \theta_j^b \partial \alpha_{\ell}^c \partial\alpha_m^d}.$$
Using McDiarmid's inequality \citep{Mc89}, we see that there exists a constant $c>0$ such that, for all $\varepsilon>0$,
$$\mathds P \Big(\Big| \sup_{\theta \in \Theta, \alpha \in \Lambda} |\Delta\hat L(\theta,\alpha)-\Delta L(\theta,\alpha)|-\mathds E\sup_{\theta \in \Theta, \alpha \in \Lambda} |\Delta\hat L(\theta,\alpha)-\Delta L(\theta,\alpha)| \Big|\geq \varepsilon\Big)\leq
2e^{-cn \varepsilon^2}.$$
Therefore, by the Borel-Cantelli lemma,
\begin{equation}
\label{tic}
\sup_{\theta \in \Theta, \alpha \in \Lambda} |\Delta\hat L(\theta,\alpha)-\Delta L(\theta,\alpha)|-\mathds E\sup_{\theta \in \Theta, \alpha \in \Lambda} |\Delta\hat L(\theta,\alpha)-\Delta L(\theta,\alpha)| \to 0 \quad \mbox{almost surely}.
\end{equation}
It is also easy to verify that under Assumptions $(H'_{\text{reg}})$, the process $(\Delta \hat L(\theta,\alpha)-\Delta L(\theta,\alpha))_{\theta \in \Theta,\alpha \in \Lambda}$ is subgaussian. Thus, as in the proof of Theorem \ref{rate}, we obtain via Dudley's inequality that
\begin{equation}
\label{tac}
\mathds E\sup_{\theta \in \Theta, \alpha \in \Lambda} |\Delta \hat L(\theta,\alpha)-\Delta L(\theta,\alpha)|={\rm O}\Big(\frac{1}{\sqrt n}\Big),
\end{equation}
since $\mathds E \Delta\hat L(\theta,\alpha)=\Delta L(\theta,\alpha)$. The result follows by combining (\ref{tic}) and (\ref{tac}).
\subsection{Some technical lemmas}
\begin{lem}
\label{openset}
Under Assumptions $(H'_{\emph{reg}})$ and $(H_1)$, there exists an open set $V \subset \Theta^{\circ}$ containing $\bar \theta$ such that, for all $\theta \in V$, ${\arg \max}_{\alpha \in \Lambda}L(\theta,\alpha) \cap  \Lambda^{\circ} \neq \emptyset$.
\end{lem}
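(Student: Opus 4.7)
The plan is to exploit the fact that under $(H_1)$, the pair $(\bar\theta,\bar\alpha)$ is unique and lies in the interior of $\Theta\times\Lambda$, and that $L$ is jointly continuous on $\Theta\times\Lambda$ under $(H'_{\text{reg}})$. The strategy is to construct a small open ball $B$ around $\bar\alpha$ that is entirely contained in $\Lambda^{\circ}$ and to isolate $\bar\alpha$ strictly on $\Lambda$ with respect to the value of $L(\bar\theta,\cdot)$. Then, a standard uniform-continuity argument will show that for $\theta$ close enough to $\bar\theta$, every maximizer of $L(\theta,\cdot)$ on $\Lambda$ has to fall inside $B$, hence in $\Lambda^\circ$.

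More precisely, I would proceed as follows. First, since $\bar\alpha\in\Lambda^{\circ}$, pick $r>0$ small enough that the closed ball $\bar B(\bar\alpha,r)\subset\Lambda^{\circ}$, and set $B=B(\bar\alpha,r)$. By $(H_1)$, $\bar\alpha$ is the unique maximizer of $L(\bar\theta,\cdot)$ on $\Lambda$, so by compactness of $\Lambda\setminus B$ and continuity of $\alpha\mapsto L(\bar\theta,\alpha)$ (guaranteed by $(H'_D)$ and $(H'_p)$), the supremum $\sup_{\alpha\in\Lambda\setminus B}L(\bar\theta,\alpha)$ is attained at some point distinct from $\bar\alpha$, and is therefore strictly smaller than $L(\bar\theta,\bar\alpha)$. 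Define
\begin{equation*}
\eta \stackrel{\mbox{\tiny{def}}}{=} L(\bar\theta,\bar\alpha)-\sup_{\alpha\in\Lambda\setminus B}L(\bar\theta,\alpha)>0.
\end{equation*}

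Second, I would invoke uniform continuity. Since $L$ is jointly continuous on the compact product $\Theta\times\Lambda$ under $(H'_{\text{reg}})$, it is uniformly continuous there. In particular, there exists an open neighborhood $W$ of $\bar\theta$ in $\Theta$ such that
\begin{equation*}
\sup_{\alpha\in\Lambda}|L(\theta,\alpha)-L(\bar\theta,\alpha)|<\eta/3, \quad \forall\theta\in W.
\end{equation*}
Setting $V=W\cap\Theta^{\circ}$ (which still contains $\bar\theta$ since $\bar\theta\in\Theta^{\circ}$ by $(H_1)$), I would then note that for every $\theta\in V$,
\begin{equation*}
L(\theta,\bar\alpha)>L(\bar\theta,\bar\alpha)-\eta/3, \qquad \sup_{\alpha\in\Lambda\setminus B}L(\theta,\alpha)<L(\bar\theta,\bar\alpha)-2\eta/3,
\end{equation*}
so $\sup_{\alpha\in\Lambda\setminus B}L(\theta,\alpha)<L(\theta,\bar\alpha)\leq\sup_{\alpha\in\Lambda}L(\theta,\alpha)$. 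Since $L(\theta,\cdot)$ is continuous on the compact set $\Lambda$, the supremum is attained, and any maximizer must lie in $B\subset\Lambda^{\circ}$. This yields $\arg\max_{\alpha\in\Lambda}L(\theta,\alpha)\cap\Lambda^{\circ}\neq\emptyset$ for every $\theta\in V$, which is the desired conclusion.

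No step is really the main obstacle here; the proof is essentially a continuity/compactness argument. The only mild subtlety is remembering to use \emph{joint} continuity of $L$ (rather than separate continuity) in order to obtain the uniform estimate on $\Lambda$ as $\theta$ varies, which is automatic from $(H'_{\text{reg}})$. No reference to the $C^2$ structure or to the differential assumptions is needed beyond continuity.
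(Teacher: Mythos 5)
Your proof is correct, and it takes a genuinely different (direct) route from the paper, which argues by contradiction: there, if the conclusion failed one could pick $\theta_k\to\bar\theta$ whose maximizers $\alpha_k$ all lie on $\partial\Lambda$; compactness gives $\alpha_k\to z\in\partial\Lambda$ along a subsequence, the bound $|L(\bar\theta,\alpha_k)-L(\bar\theta,\bar\alpha)|\le 2\sup_{\alpha\in\Lambda}|L(\bar\theta,\alpha)-L(\theta_k,\alpha)|\to 0$ forces $L(\bar\theta,z)=L(\bar\theta,\bar\alpha)$, hence $z=\bar\alpha$ by $(H_1)$, contradicting $\bar\alpha\in\Lambda^{\circ}$. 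You instead make the same ingredients quantitative: uniqueness of $\bar\alpha$ plus compactness of $\Lambda\setminus B$ produces a strict margin $\eta>0$, and the uniform-in-$\alpha$ control of $\theta\mapsto L(\theta,\cdot)$ --- exactly the estimate the paper also invokes, which you may justify either by your joint-continuity/compactness argument or, as the paper does, directly from $(H'_D)$, $(H'_p)$ and $\mu(E)<\infty$ --- transfers that margin to a whole neighborhood of $\bar\theta$. Your version buys a bit more: for every $\theta\in V$ the \emph{entire} set ${\arg\max}_{\alpha\in\Lambda}L(\theta,\alpha)$ lies in $B\subset\Lambda^{\circ}$ (not merely a nonempty intersection), and the size of $V$ is explicit in terms of $\eta$; the paper's subsequence argument is shorter but purely qualitative. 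Two trivial points to tidy up: dispose of the degenerate case $\Lambda\setminus B=\emptyset$ (the conclusion is then immediate), and note that after taking suprema your second displayed inequality should read $\sup_{\alpha\in\Lambda\setminus B}L(\theta,\alpha)\le L(\bar\theta,\bar\alpha)-2\eta/3$, which still lies strictly below $L(\theta,\bar\alpha)>L(\bar\theta,\bar\alpha)-\eta/3$, so the conclusion is unaffected.
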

\begin{proof}
Assume that the statement is not true. Then there exists a sequence $(\theta_{k})_k\subset \Theta$ such that $\theta_k \to \bar \theta$ and, for all $k$, $\alpha_k \in \partial \Lambda$, where $\alpha_k \in {\arg \max}_{\alpha \in \Lambda} L(\theta_k,\alpha)$. Thus, since $\Lambda$ is compact, even if this means extracting a subsequence, one has $\alpha_k \to z \in \partial \Lambda$ as $k \to \infty$. By the continuity of $L$, $L(\bar \theta,\alpha_k) \to L(\bar \theta,z)$. But
\begin{align*}
|L(\bar \theta,\alpha_k)-L(\bar \theta,\bar \alpha)|  & \leq |L(\bar \theta,\alpha_k)-L(\theta_k,\alpha_k)| +|L(\theta_k,\alpha_k)-L(\bar \theta,\bar \alpha)|\\
& \leq \sup_{\alpha \in \Lambda} |L(\bar \theta,\alpha)-L(\theta_k,\alpha)|+|\sup_{\alpha \in \Lambda}L(\theta_k,\alpha)-\sup_{\alpha \in \Lambda}L(\bar \theta,\alpha)|\\
& \leq 2 \sup_{\alpha \in \Lambda} |L(\bar \theta,\alpha)-L(\theta_k,\alpha)|,
\end{align*}
which tends to zero as $k \to \infty$ by $(H'_D)$ and $(H'_p)$. Therefore, $L(\bar \theta,z)=L(\bar \theta,\bar \alpha)$ and, in turn, $z=\bar \alpha$ by $(H_1)$. Since $z \in \partial \Delta$ and $\bar \alpha \in \Delta^{\circ}$, this is a contradiction.
\end{proof}
\begin{lem}
\label{lem-tech1}Under Assumptions $(H'_{\emph{reg}})$, $(H_1)$, and $(H_{\emph{loc}})$, one has $\hat I_3 \to HV(\bar \theta)$ almost surely.
\end{lem}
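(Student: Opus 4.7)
The plan is to exploit the envelope-theorem representation
$$H\hat V(\theta)=H_1\hat L(\theta,\hat\alpha(\theta))-J\bigl(\nabla_1\hat L(\theta,\cdot)\bigr)_{\hat\alpha(\theta)}\,H_2\hat L(\theta,\hat\alpha(\theta))^{-1}\,J\bigl(\nabla_2\hat L(\cdot,\hat\alpha(\theta))\bigr)_{\theta}$$
already derived in the proof of Theorem \ref{clt}, and the analogous identity for $HV(\bar\theta)$ with $\alpha(\bar\theta)=\bar\alpha$. The strategy is to show that, along the segment $[\bar\theta,\hat\theta]$, each of the four factors on the right converges almost surely to the corresponding factor at $(\bar\theta,\bar\alpha)$, uniformly in $t\in[0,1]$, and then pass to the limit inside the integral defining $\hat I_3$.

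First I would use Theorem \ref{consistency} to restrict attention to any fixed neighborhood of $\bar\theta$ (so that the interpolants stay in $U$ eventually). From the proof of Theorem \ref{clt} we already know $\hat\alpha(\theta)\to\alpha(\theta)$ uniformly on $U$ and $\alpha(\cdot)$ is continuous with $\alpha(\bar\theta)=\bar\alpha$; hence whenever $\theta_n\to\bar\theta$, we have $\hat\alpha(\theta_n)\to\bar\alpha$ almost surely. Next I would invoke Lemma \ref{differentials} for every quadruple $(a,b,c,d)$ with $a+b+c+d=2$ to conclude that $H_1\hat L$, $H_2\hat L$, $J(\nabla_1\hat L(\theta,\cdot))_{\alpha}$ and $J(\nabla_2\hat L(\cdot,\alpha))_{\theta}$ each converge, almost surely and uniformly on $\Theta\times\Lambda$, to their population counterparts. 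Combined with the continuity of those population derivatives (which is $(H'_{\text{reg}})$) and the previous step, each factor evaluated along the random path $\theta\mapsto(\theta,\hat\alpha(\theta))$ converges almost surely to its value at $(\bar\theta,\bar\alpha)$, uniformly in $\theta$ on the segment.

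For the middle factor, $(H_H)$ gives invertibility of $H_2L(\bar\theta,\bar\alpha)$; continuity extends this to a neighborhood, and the uniform convergence of Step 2 then makes $H_2\hat L(\theta,\hat\alpha(\theta))$ eventually invertible along the segment, with inverse converging uniformly to $H_2L(\bar\theta,\bar\alpha)^{-1}$ because $A\mapsto A^{-1}$ is continuous on the open set of invertible matrices. Assembling the three uniformly convergent factors yields
$$\sup_{t\in[0,1]}\bigl\|H\hat V\bigl(\bar\theta+t(\hat\theta-\bar\theta)\bigr)-HV(\bar\theta)\bigr\|\longrightarrow 0\quad\text{almost surely},$$
and the conclusion $\hat I_3\to HV(\bar\theta)$ follows by dominated convergence in the integral defining $\hat I_3$.

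The main obstacle is the third step: one needs not merely pointwise, but uniform, invertibility of $H_2\hat L$ along the random path $\theta\mapsto(\theta,\hat\alpha(\theta))$, together with a uniform bound on operator norms of the inverses. This is handled by fixing a (deterministic) neighborhood of $(\bar\theta,\bar\alpha)$ on which the continuous map $(\theta,\alpha)\mapsto H_2L(\theta,\alpha)^{-1}$ has bounded operator norm, and then using the uniform convergence in Step 2 to absorb $\hat\alpha(\theta)$ into that neighborhood for $n$ large enough; this turns the identity $A^{-1}-B^{-1}=A^{-1}(B-A)B^{-1}$ into a genuine uniform control.
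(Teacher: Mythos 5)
Your proposal follows essentially the same route as the paper's proof: the envelope-theorem expression for $H\hat V$ along the segment, uniform almost-sure convergence of all second-order derivatives of $\hat L$ from Lemma \ref{differentials}, convergence of $\hat\alpha$ along the random path to $\bar \alpha$, local invertibility of $H_2L$ near $(\bar \theta,\bar \alpha)$ (your resolvent-identity control of the inverse plays the role of the paper's cofactor argument), and passage to the limit inside the integral defining $\hat I_3$. One caveat: the proof of Theorem \ref{clt} establishes $\hat\alpha(\theta)\to\alpha(\theta)$ only pointwise (almost surely, for all $\theta \in U$), not uniformly on $U$, so your step ``$\hat\alpha(\theta_n)\to\bar\alpha$ whenever $\theta_n\to\bar\theta$'' should instead be justified as the paper does---via $\sup_{\theta \in U}|L(\theta,\hat\alpha(\theta))-L(\theta,\alpha(\theta))|\to 0$, continuity of $L$, compactness of $\Lambda$, and uniqueness of $\bar\alpha$ under $(H_1)$---after which the rest of your argument goes through unchanged.
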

\begin{proof}
We have
$$
\hat I_3=\int_0^1 H\hat V(\hat \theta+t(\hat \theta-\bar \theta)){\rm d}t=\int_0^1 \big(H_1\hat L(\hat \theta_t,\hat \alpha(\hat \theta_t))+J(\nabla_1\hat L(\hat \theta_t,\cdot))_{\hat \alpha(\hat\theta_t)}J(\hat \alpha)_{\hat \theta_t}\big){\rm d}t,
$$
where we set $\hat \theta_t =\hat \theta+t(\hat \theta-\bar \theta)$. Note that $\hat \theta_t \in U$ for all $t\in [0,1]$. By Lemma \ref{differentials},
\begin{align*}
&\sup_{t \in [0,1]}\|H_1\hat L({\hat \theta}_t,\hat \alpha({\hat \theta}_t))-H_1L({\hat \theta}_t,\hat \alpha(\hat \theta_t))\|\\
&\quad \leq \sup_{\theta \in \Theta, \alpha \in \Lambda}\|H_1\hat L(\theta,\alpha)-H_1L(\theta,\alpha)\|\to 0\quad \mbox{almost surely}.
\end{align*}
Also, by Theorem \ref{consistency}, for all $t\in [0,1]$, $\hat \theta_t \to \bar \theta$ almost surely. Besides,
\begin{align*}
|L(\bar \theta,\hat \alpha(\hat \theta_t))-L(\bar \theta,\alpha(\bar \theta))|& \leq |L(\bar \theta,\hat \alpha(\hat \theta_t))-L(\hat \theta_t,\hat \alpha(\hat \theta_t))|+|L(\hat \theta_t,\hat \alpha(\hat \theta_t))-L(\bar \theta,\alpha(\bar \theta))|\\
& \leq \sup_{\alpha \in \Lambda} |L(\bar \theta,\alpha)-L(\hat \theta_t,\alpha)|+2\sup_{\theta \in \Theta,\alpha \in \Lambda}|\hat L(\theta,\alpha)-L(\theta,\alpha)|.
\end{align*}
Thus, via  $(H'_{\text{reg}})$, $(H_1)$, and Lemma \ref{differentials}, we conclude that almost surely, for all $t \in [0,1]$, $\hat \alpha (\hat \theta_t) \to \alpha(\bar \theta)=\bar \alpha$. Accordingly, almost surely, for all $t \in [0,1]$, $H_1L(\hat \theta_t,\hat \alpha(\hat \theta_t))\to H_1L(\bar \theta,\bar \alpha)$. Since $H_1L(\theta,\alpha)$ is bounded under $(H'_D)$ and $(H'_p)$, the Lebesgue dominated convergence theorem leads to
\begin{equation}
\label{SK1}
\int_0^1 H_1 \hat L(\hat \theta_t,\hat \alpha(\hat \theta_t)){\rm d}t \to H_1L(\bar \theta,\bar \alpha) \quad \mbox{almost surely}.
\end{equation}
Furthermore,
$$J(\hat \alpha)_{\theta}=-H_2\hat L(\theta,\hat \alpha(\theta))^{-1}J\big(\nabla_2\hat L(\cdot,\hat \alpha(\theta))\big)_{\theta}, \quad \forall (\theta,\alpha) \in U \times \alpha(U),$$
where $U$ is the open set defined in the proof of Theorem \ref{clt}. By the cofactor method, $H_2\hat L(\theta,\alpha)^{-1}$ takes the form
$$H_2\hat L(\theta,\alpha)^{-1}=\frac{\hat c(\theta,\alpha)}{\mbox{det}(H_2\hat L(\theta, \alpha))},$$
where $\hat c(\theta,\alpha)$ is the matrix of cofactors associated with $H_2\hat L(\theta,\alpha)$. Thus, each component of $-H_2\hat L(\theta,\alpha)^{-1}J(\nabla_2\hat L(\cdot,\alpha))_{\theta}$ is a quotient of a multilinear form of the partial derivatives of $\hat L$ evaluated at $(\theta,\alpha)$ divided by $\mbox{det}(H_2\hat L(\theta,\alpha))$, which is itself a multilinear form in the $\frac{\partial^2 \hat L}{\partial \alpha_i \partial \alpha_j}(\theta,\alpha)$. Hence, by Lemma \ref{differentials}, we have
$$\sup_{\theta \in U,\alpha \in \alpha(U)}\|H_2\hat L(\theta,\alpha)^{-1} J(\nabla_2\hat L(\cdot,\alpha))_{\theta}-H_2L(\theta,\alpha)^{-1}J(\nabla_2L(\cdot,\alpha))_{\theta}\|\to 0\,\, \mbox{ almost surely}.$$
So, for all $n$ large enough,
\begin{align*}
&\sup_{t \in [0,1]}\|J(\hat \alpha)_{\hat \theta_t}+H_2L(\hat \theta_t,\hat \alpha(\hat \theta_t))^{-1}J\big(\nabla_2L(\cdot,\hat \alpha(\hat \theta_t))\big)_{\hat \theta_t}\|\\
& \quad \leq \sup_{\theta \in U,\alpha \in \alpha(U)}\|H_2\hat L(\theta,\alpha)^{-1}J(\nabla_2 \hat L(\cdot,\alpha))_{\theta}-H_2L(\theta,\alpha)^{-1}J(\nabla_2L(\cdot,\alpha))_{\theta}\|\\
& \quad \to 0\quad \mbox{almost surely}.
\end{align*}
We know that almost surely, for all $t\in [0,1]$, $\hat \alpha(\hat \theta_t)\to \bar \alpha$. Thus, since the function $U\times \alpha(U) \ni (\theta,\alpha)\mapsto H_2 L(\theta,\alpha)^{-1}J(\nabla_2 L(\cdot,\alpha))_{\theta}$ is continuous, we have almost surely, for all $t\in [0,1]$,
$$H_2 \hat L(\hat \theta_t,\hat \alpha(\hat \theta_t))^{-1}J\big(\nabla_2 \hat L(\cdot,\hat \alpha(\hat \theta_t))\big)_{\hat \theta_t}\to H_2L(\bar \theta,\bar \alpha)^{-1}J(\nabla_2 L(\cdot,\bar \alpha))_{\bar \theta}.$$
Therefore, almost surely, for all $t\in [0,1]$, $J(\hat \alpha)_{\hat \theta_t} \to J(\alpha)_{\bar \theta}$. Similarly,  almost surely, for all $t\in [0,1]$, $J(\nabla_1 \hat L(\hat \theta_t,\cdot))_{\hat \alpha(\hat \theta_t)}\to J(\nabla_1L(\bar \theta,\cdot))_{\bar \alpha}$. All involved quantities are uniformly bounded in $t$, and so, by the Lebesgue dominated convergence theorem, we conclude that
\begin{equation}
\label{SK2}
\int_0^1 J(\nabla_1\hat L(\hat \theta_t,\cdot))_{\hat \alpha(\hat \theta_t)}J(\hat \alpha)_{\hat \theta_t}{\rm d}t\to J(\nabla_1L(\bar \theta,\cdot))_{\bar \alpha}J(\alpha)_{\bar \theta}\quad \mbox{almost surely}.
\end{equation}
Consequently, by combining (\ref{SK1}) and (\ref{SK2}),
$$\hat I_3 \to H_1L(\bar \theta,\bar \alpha)+J(\nabla_1L(\bar \theta,\cdot))_{\bar \alpha}J(\alpha)_{\bar \theta}=HV(\bar \theta) \quad \mbox{almost surely},$$
as desired.
\end{proof}
\subsubsection*{Acknowledgments} We thank Flavian Vasile (Criteo) for stimulating discussions 
and insightful suggestions.
\bibliography{biblio-adversarial}

\end{document}